\documentclass{article}

\usepackage[preprint]{neurips_2026}

\usepackage[utf8]{inputenc}  
\usepackage[T1]{fontenc}     
\usepackage{url}             
\usepackage{xcolor}          
\usepackage{microtype}       

\usepackage{amsmath}
\usepackage{amssymb}
\usepackage{amsfonts}
\usepackage{mathtools}
\usepackage{amsthm}
\usepackage{bm}              
\usepackage{nicefrac}        

\usepackage{booktabs}        
\usepackage{multirow}
\usepackage{colortbl}

\usepackage{graphicx}
\usepackage{subcaption}
\usepackage{wrapfig}

\usepackage{algorithm}
\usepackage{algpseudocode}
\let\REQUIRE\Require
\let\ENSURE\Ensure
\let\STATE\State
\let\FOR\For
\let\ENDFOR\EndFor
\let\IF\If
\let\ENDIF\EndIf

\let\COMMENT\Comment
\usepackage[normalem]{ulem}

\usepackage[textsize=tiny]{todonotes}

\usepackage{hyperref}
\usepackage[capitalize,noabbrev]{cleveref}

\theoremstyle{plain}
\newtheorem{theorem}{Theorem}[section]

\theoremstyle{definition}

\theoremstyle{remark}

\newcommand{\twolinecap}[1]{%
  \parbox[t]{\linewidth}{\centering\footnotesize #1\vphantom{Ag}\\\vphantom{Ag}}%
}
 
\title{Refining Multidimensional Video Reward Models via Disentangled Influence Functions}

\author{%
  Muyao Wang \\
  The University of Tokyo
  \And
  Zeke Xie \\
  HKUST (Guangzhou)
  \And
  Hideki Nakayama \\
  The University of Tokyo
}

\begin{document}

\maketitle

\begin{abstract}
As Text-to-Video (T2V) generation models continue to evolve, the complexity of video evaluation necessitates a fine-grained assessment across various axes. 
To address this, recent works have focused on developing Multidimensional Video Reward Models (MVRMs), which decompose the evaluation process to better align with the multifaceted nature of human visual perception. However, training effective MVRMs is fundamentally challenged by the complex nature of video data. In this work, we identify a critical phenomenon termed \textit{Dimensional Heterogeneity}:  the reliability of a training sample can vary substantially across evaluation dimensions, meaning that a sample may provide reliable supervision for one objective while inducing high supervision risk for another. Consequently, prevailing data-centric methods that filter based on global scalar metrics are ill-posed for T2V tasks. To address this, we propose a disentangled influence framework that that efficiently estimates dimension-specific supervision risk. Leveraging this framework, we introduce two dimension-disentangled refinement strategies: Dimension-Disentangled Pruning, which removes extreme high-risk samples, and Dimension-Disentangled Reweighting, which softly down-weights high-risk supervision. Extensive experiments demonstrate that our disentangled strategies significantly outperform global filtering baselines, yielding reward models with superior alignment to ground truth.
\end{abstract}

\section{Introduction}

The domain of Text-to-Video (T2V) generation has witnessed a meteoric rise, with recent foundation models~\citep{wan2025wan,team2025kling,kong2024hunyuanvideo,blattmann2023stable,hong2022cogvideo,wang2023modelscope} demonstrating unprecedented capabilities in synthesizing dynamic visual content.
To further enhance generation quality and align these models with human intent, Reinforcement Learning from Human Feedback (RLHF) has been widely adopted as the gold standard, following its immense success in Large Language Models (LLMs)~\citep{stiennon2020learning,ziegler2019fine,christiano2017deep}.
However, applying RLHF to the video domain presents unique challenges. Unlike language tasks where a single scalar reward often suffices, video generation intrinsically involves multiple orthogonal dimensions~\citep{liu2024evalcrafter,huang2024vbench,liu2023fetv}---such as visual quality, temporal consistency, and semantic alignment---which cannot be adequately captured by a single metric. 
Consequently, \textbf{Multidimensional Video Reward Models (MVRMs)}~\citep{tongmj,he2024videoscore,xu2024visionreward,liu2025improving} have emerged as the standard for providing comprehensive feedback in video generation tasks.

However, training effective MVRMs is fundamentally bounded by the complex nature of the training data. 
Prior data-centric studies that aim to improve the quality of reward model~\cite{min2025understanding} typically utilize influence functions to filter data based on a single aggregated scalar (e.g., total loss), and primarily validate their effectiveness on NLP tasks.
We argue that this ``one-size-fits-all'' approach overlooks a critical characteristic of video data: \textbf{Dimensional Heterogeneity}. 
A training sample is rarely uniformly ``reliable'' or ``unreliable'' across all evaluation perspectives; it may provide a stable supervision signal for one dimension while inducing high supervision risk for another, potentially due to label noise.
As illustrated in our empirical analysis (Fig.~\ref{fig:scatter_plot}), we visualize the \textit{self-influence} distribution of training samples across different dimensions (e.g., \textit{Visual Quality} vs. \textit{Dynamic Degree}), where self-influence is used as a proxy for dimension-specific supervision risk.
\begin{wrapfigure}{r}{0.50\textwidth}
    \vspace{-10pt}
    \centering
    \includegraphics[width=0.48\textwidth]{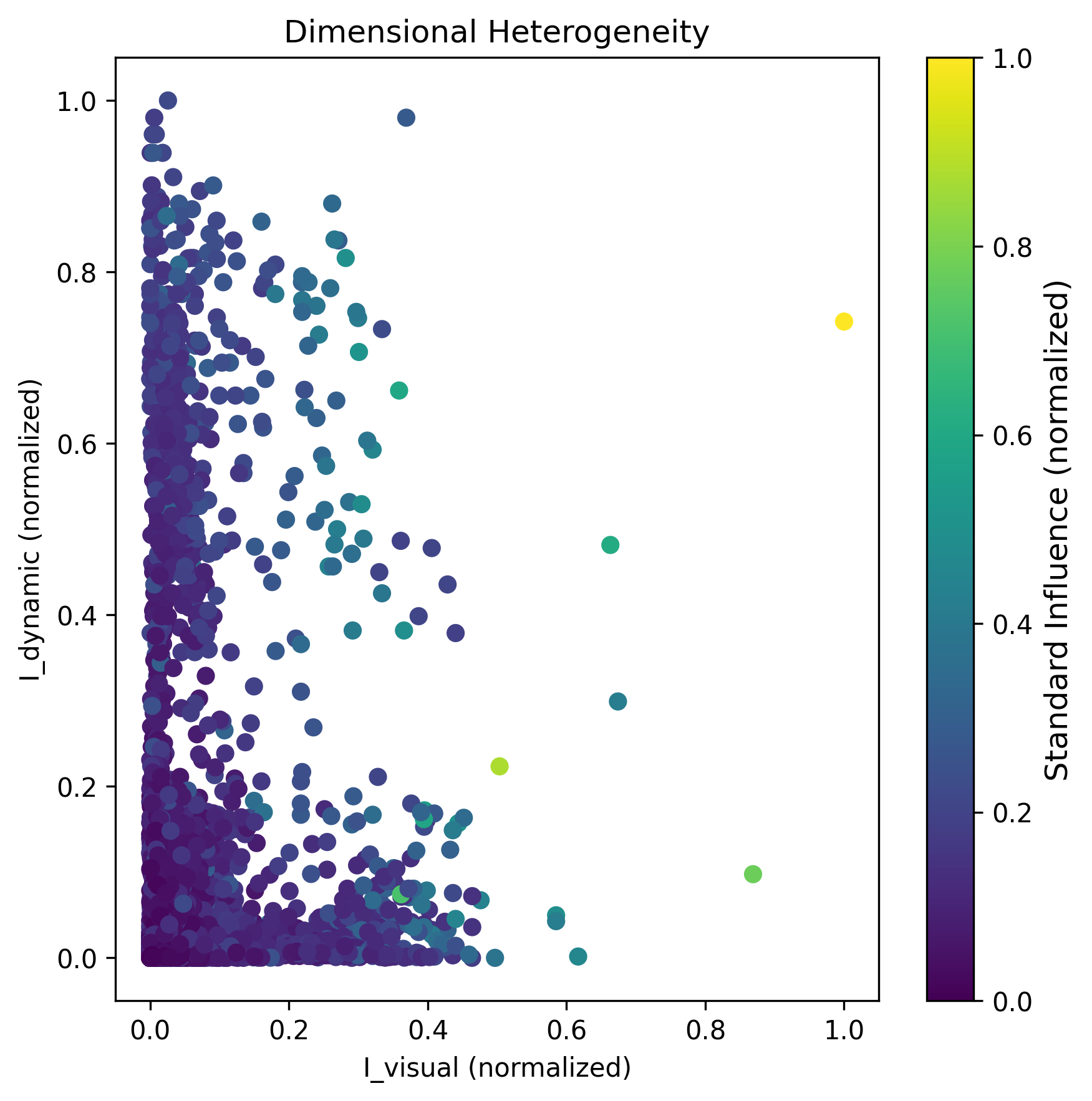}
    \caption{\textbf{Dimensional Heterogeneity in Video Training Data.} 
    We visualize the distribution of sample self-influence scores for \textit{Visual Quality} (x-axis) versus \textit{Dynamic Degree} (y-axis). 
The scatter plot exhibits a dispersed distribution with negligible correlation, indicating that supervision reliability is dimension-specific. 
Samples with high self-influence in one dimension, which we use as a proxy for dimension-specific supervision risk, often have low self-influence in the other.}
    \label{fig:scatter_plot}
    \vspace{-10pt}
\end{wrapfigure}
The results reveal a striking lack of correlation---samples with high self-influence, which we interpret as high dimension-specific supervision risk, in the visual dimension often exhibit negligible self-influence in the dynamic dimension.
This observation implies that indiscriminately filtering or reweighting data based on a global metric can lead to sub-optimal outcomes: it may retain samples that induce unreliable supervision, such as label noise for specific dimensions.

To address this challenge, we adopt a disentangled perspective to refine the training of MVRMs. 
Existing influence-based methods~\cite{koh2017understanding, pruthi2020estimating} operate on coarse-grained aggregated losses, failing to \textit{disentangle} the specific impacts of a sample on distinct reward dimensions. 
To bridge this gap, we propose a disentangled influence framework tailored for MVRMs. 
For practical scalability, we employ TracIn~\cite{pruthi2020estimating} strategy. 
This allows us to estimate dimension-specific self-influence using gradients on the corresponding projection heads, without requiring explicit Hessian inversion.
Crucially, we operationalize this framework into two alternative risk-aware strategies based on the same dimension-specific supervision-risk signal: 
\textbf{Dimension-Disentangled Pruning (DDP)}, which applies hard filtering to the extreme high-risk tail to prioritize robustness, 
and \textbf{Dimension-Disentangled Reweighting (DDR)}, which softly adjusts the contribution of high-risk supervision to reduce its optimization dominance while preserving potentially useful hard samples.
Rather than assuming that high self-influence perfectly separates label noise from difficult-but-informative samples, these two strategies provide different robustness--data utilization trade-offs under this inherent ambiguity.
The empirical results suggest that accounting for dimensional heterogeneity can contribute to more reliable MVRMs.

This work makes three main contributions.
\begin{itemize}
\item \textbf{Empirical Discovery of Dimensional Heterogeneity:}
We systematically analyze data quality in video reward models and reveal a previously overlooked phenomenon: supervision reliability can be highly heterogeneous across reward dimensions.
This challenges global data-quality assumptions and exposes the limitation of global filtering in capturing dimension-specific supervision risk.
\item \textbf{Disentangled Influence Framework:}
Motivated by this finding, we propose a disentangled influence framework with diagonal projection to efficiently estimate dimension-specific supervision risk.
\item \textbf{Risk-Aware Data Refinement Strategies:}
We design two strategies, \textit{Dimension-Disentangled Pruning} and \textit{Dimension-Disentangled Reweighting}, which perform hard filtering and soft reweighting using the same dimension-specific risk signal.
Experiments show that they outperform global filtering baselines and improve reward-model alignment with human judgments.
\end{itemize}

\section{Related Works}
\paragraph{Multidimensional Reward Modeling for Video Generation}

The rapid evolution of T2V generation has necessitated robust mechanisms to align model outputs with human preferences. Unlike assessing static images, evaluating synthesized videos requires a holistic perspective that encompasses multiple dimensions, such as visual quality, temporal consistency, and text-video alignment~\citep{tongmj,he2024videoscore,huang2024vbench,xu2024visionreward,wu2024boosting}.

Early approaches primarily relied on single-dimensional metrics or an ensemble of separate models. For instance, FVD~\citep{unterthiner2018towards} focuses on distribution-level quality, while CLIP-Score~\citep{hessel2021clipscore} measures semantic alignment. However, deploying multiple independent models for reward labeling is computationally expensive and results in a fragmented evaluation pipeline. Such disjointed approaches lack the efficiency and coherence of a unified system, making them ill-suited for the scalable training of modern video generation models.

To address these limitations, recent works have shifted towards unified multidimensional reward models built upon Large Vision-Language Models (LVLM). Benefiting from the generalization capabilities of LVLM like Mantis~\citep{jiangmantis} or Video-LLaVA~\citep{lin2024video}, methods such as VideoScore~\citep{he2024videoscore} and MJ-VIDEO~\citep{tongmj} adopt a paradigm in which a single forward pass outputs scores across multiple criteria simultaneously. This paradigm not only improves inference efficiency but also aligns more closely with fine-grained human feedback.

\paragraph{Multi-objective Learning vs. Data-centric Refinement}

A substantial body of prior work addresses multi-objective supervision through \emph{optimization-level balancing} techniques, including uncertainty-based loss weighting \citep{kendall2018multi}, gradient surgery methods such as PCGrad \citep{yu2020gradient}, and random or adaptive weighting schemes \citep{linreasonable}. These methods focus on mitigating gradient conflicts during joint optimization by dynamically adjusting objective contributions.

However, such approaches typically assume that the underlying training supervision is uniformly reliable across objectives. They do not explicitly model \emph{sample-level heterogeneity}, where a single training example may provide stable supervision for one dimension while inducing high-risk supervision for another, such as label noise. In the context of video reward modeling---where annotations are often heuristic, subjective, or weakly defined---this assumption is frequently violated. Our work departs from optimization-centric balancing and instead adopts a \emph{data-centric perspective}, aiming to diagnose and refine dimension-specific supervision risk.

\paragraph{Influence Functions for Supervision-Risk Estimation}

Influence functions provide a principled framework for estimating the effect of individual training samples on model predictions, while self-influence has been used as a useful signal for identifying potentially mislabeled samples \citep{koh2017understanding, pruthi2020estimating}. Due to the prohibitive cost of exact Hessian inversion, practical influence analysis often relies on approximations such as TracIn \citep{pruthi2020estimating}, last-layer influence estimation \citep{tan2024data,barshan2020relatif,park2023trak}, or Kronecker-factored approximations \citep{george2018fast}. These techniques have been successfully applied to data pruning and cleaning, primarily in single-objective settings.

Despite their effectiveness, existing influence-based methods fundamentally assume a \emph{scalar loss formulation}. When applied to MVRMs, influence is computed with respect to an aggregated objective, implicitly conflating the effects of distinct reward dimensions. As a result, dimension-specific supervision risks may be masked: a sample that appears benign under the aggregated objective may still induce unreliable supervision for a particular dimension. This limitation renders standard influence metrics ill-suited for fine-grained data refinement in multidimensional reward modeling.

\paragraph{Positioning of Our Work}

In this work, we bridge the gap between multidimensional reward modeling and data-centric influence analysis. Rather than treating influence as a monolithic scalar quantity, we explicitly disentangle sample influence across reward dimensions, enabling dimension-disentangled pruning and reweighting strategies. Our approach addresses a distinct and underexplored challenge: \emph{dimension heterogeneity in MVRMS training}. To our best knowledge, this is the first systematic investigation leveraging disentangled influence framework for training MVRMS.

\section{Methodology}
\label{sec:method}

In this section, we formulate the \textbf{Disentangled Influence Framework}, which serves as the theoretical foundation for our dimension-disentangled data refinement strategies. We demonstrate how the scalar influence metric used in prior works can be mathematically decomposed into a matrix form, revealing the underlying interplay between distinct reward dimensions.

\subsection{Formulation: Disentangled Influence Function}

Consider a multidimensional reward model $f_{\boldsymbol{\theta}}$ parameterized by $\boldsymbol{\theta} \in \mathbb{R}^d$, designed to evaluate $K$ distinct dimensions (e.g., visual quality, textual alignment). Let $\mathcal{Z} = \{z_i\}_{i=1}^N$ be the training dataset. The training objective minimizes the empirical risk $\mathcal{R}(\boldsymbol{\theta})$, where the loss for a single sample is the weighted aggregation of $K$ dimension-specific losses:
\begin{equation}
    \mathcal{L}(z; \boldsymbol{\theta}) = \sum_{k=1}^K \lambda_k \mathcal{L}_k(z; \boldsymbol{\theta}).
\end{equation}

\textbf{The Scalar Limitation.}
Formally, the standard influence function $\mathcal{I}(z, z_{test})$ estimates the effect of upweighting a training sample $z$ on the loss of a test sample $z_{test}$, computed as $\mathcal{I}(z, z_{test}) := \nabla_{\boldsymbol{\theta}} \mathcal{L}(z_{test})^\top \mathbf{H}_{\hat{\boldsymbol{\theta}}}^{-1} \nabla_{\boldsymbol{\theta}} \mathcal{L}(z)$~\cite{koh2017understanding}.
While effective for single-objective tasks, this scalar formulation becomes problematic in multidimensional settings.
Crucially, the standard influence operates on the \textit{aggregated} loss gradients $\nabla_{\boldsymbol{\theta}} \mathcal{L}$.
This aggregation collapses the vector-valued supervision signals into a single scalar, obscuring the nuanced \textit{dimensional heterogeneity}---it fails to distinguish whether a training sample acts as a ``collaborator'' for the visual dimension while simultaneously being a ``competitor'' for the semantic dimension.

\textbf{Disentangled Influence Matrix.}
To resolve this ambiguity, we propose to analyze influence at the granularity of individual dimensions. By exploiting the linearity of the gradient operator, we prove that the monolithic scalar influence is, in fact, the sum of entries in a latent interaction matrix. We term this the \textbf{Disentangled Influence Matrix}.

\begin{theorem}
\label{thm:disentangled_influence}
\textbf{(Disentangled Influence Function)}
Let $\mathcal{I}(z, z_{test})$ be the standard scalar influence of training sample $z$ on test sample $z_{test}$. Assuming the Hessian $\mathbf{H}_{\hat{\boldsymbol{\theta}}}$ is positive definite, this scalar influence can be exactly decomposed into the sum of all elements of a $K \times K$ Disentangled Influence Matrix $\boldsymbol{\mathcal{M}}(z, z_{test}) \in \mathbb{R}^{K \times K}$:
\begin{equation}
    \mathcal{I}(z, z_{test}) = \sum_{j=1}^K \sum_{k=1}^K \phi_{j,k}, \quad \text{where } \boldsymbol{\mathcal{M}} = [\phi_{j,k}].
\end{equation}
Each entry $\phi_{j,k}$ represents the \textbf{Disentangled Influence} of the $k$-th dimension of the training sample on the $j$-th dimension of the test sample:
\begin{equation} \label{eq:matrix_element}
    \phi_{j,k} := \lambda_j \lambda_k \nabla_{\boldsymbol{\theta}} \mathcal{L}_j(z_{test}; \hat{\boldsymbol{\theta}})^\top \mathbf{H}_{\hat{\boldsymbol{\theta}}}^{-1} \nabla_{\boldsymbol{\theta}} \mathcal{L}_k(z; \hat{\boldsymbol{\theta}}).
\end{equation}
\end{theorem}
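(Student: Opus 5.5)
The plan is to expand both gradients in the definition of $\mathcal{I}(z, z_{test})$ using linearity of the gradient, and then use bilinearity of the form $(u,v)\mapsto u^\top \mathbf{H}_{\hat{\boldsymbol{\theta}}}^{-1} v$. Positive definiteness is only needed so that $\mathbf{H}_{\hat{\boldsymbol{\theta}}}^{-1}$ exists and the standard influence is well defined. No symmetry or other property of the Hessian enters the decomposition.

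First, since each $\lambda_k$ is a constant independent of $\boldsymbol{\theta}$, differentiating the aggregated loss term by term gives, for any sample $z$,
\begin{equation*}
\nabla_{\boldsymbol{\theta}} \mathcal{L}(z;\hat{\boldsymbol{\theta}}) = \sum_{k=1}^K \lambda_k \nabla_{\boldsymbol{\theta}} \mathcal{L}_k(z;\hat{\boldsymbol{\theta}}).
\end{equation*}
This requires each $\mathcal{L}_k$ to be differentiable at $\hat{\boldsymbol{\theta}}$, which is implicit in the definition of $\mathcal{I}$. We apply the identity to $z_{test}$ with summation index $j$ and to $z$ with summation index $k$.

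Second, we substitute both expansions into $\mathcal{I}(z, z_{test}) = \nabla_{\boldsymbol{\theta}} \mathcal{L}(z_{test})^\top \mathbf{H}_{\hat{\boldsymbol{\theta}}}^{-1} \nabla_{\boldsymbol{\theta}} \mathcal{L}(z)$. Transposition is linear, and matrix-vector products distribute over finite sums. Expanding therefore gives
\begin{equation*}
\mathcal{I}(z, z_{test}) = \sum_{j=1}^K \sum_{k=1}^K \lambda_j \lambda_k \nabla_{\boldsymbol{\theta}} \mathcal{L}_j(z_{test};\hat{\boldsymbol{\theta}})^\top \mathbf{H}_{\hat{\boldsymbol{\theta}}}^{-1} \nabla_{\boldsymbol{\theta}} \mathcal{L}_k(z;\hat{\boldsymbol{\theta}}).
\end{equation*}
Each summand is exactly $\phi_{j,k}$ as defined in \eqref{eq:matrix_element}. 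Collecting the summands into $\boldsymbol{\mathcal{M}} = [\phi_{j,k}]$ gives the claim.

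There is no real obstacle here; the argument is a finite bilinear expansion. The only point needing care is bookkeeping. The index $j$ must be attached to the test-sample gradient on the left of $\mathbf{H}_{\hat{\boldsymbol{\theta}}}^{-1}$, and $k$ to the training-sample gradient on the right, so that $\phi_{j,k}$ has the stated interpretation as the effect of training dimension $k$ on test dimension $j$. We also note that $\boldsymbol{\mathcal{M}}$ need not be symmetric, since its two indices refer to different samples. We will also state that the same decomposition holds verbatim for any invertible $\mathbf{H}_{\hat{\boldsymbol{\theta}}}$ and for the TracIn variant, where $\mathbf{H}_{\hat{\boldsymbol{\theta}}}^{-1}$ is replaced by the identity (scaled by the learning rate and summed over checkpoints). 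This is because only linearity was used, and it justifies the later restriction to the diagonal entries $\phi_{k,k}$ for dimension-specific self-influence.
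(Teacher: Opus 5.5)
Your proposal is correct and follows essentially the same route as the paper's proof in Appendix~\ref{app:proof_thm1}. You expand both aggregated gradients by linearity, substitute them into $\nabla_{\boldsymbol{\theta}} \mathcal{L}(z_{test})^\top \mathbf{H}_{\hat{\boldsymbol{\theta}}}^{-1} \nabla_{\boldsymbol{\theta}} \mathcal{L}(z)$, and distribute to get the double sum of $\phi_{j,k}$. The only difference is that the paper first rederives the Koh--Liang influence, including its minus sign, which it then drops by convention. You instead start directly from the unsigned definition in the statement, which is all the theorem needs.
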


\begin{proof}
See Appendix~\ref{app:proof_thm1} for the detailed derivation based on the linearity of gradients.
\end{proof}

This decomposition converts influence estimation from a scalar quantity into a dimension-resolved attribution map at the sample level, making how training samples affect each reward dimension.

\subsection{Isolating Dimension-Specific Supervision Risk via Diagonal Projection}
\label{sec:self_influence}

While Theorem~\ref{thm:disentangled_influence} derives a complete $K \times K$ interaction matrix, we use a diagonal projection to obtain a dimension-specific supervision-risk signal.
We bypass the off-diagonal terms ($\phi_{j,k}, j \neq k$), which mainly reflect cross-dimensional optimization coupling, and focus on the diagonal term $\phi_{k,k}$.
This diagonal term, termed \emph{self-influence}, measures how strongly a sample affects the learning of its corresponding reward dimension.
Following prior influence-function studies, self-influence has been used to identify mislabeled or abnormal training examples~\citep{koh2017understanding,pruthi2020estimating}.
In our multidimensional setting, we interpret high self-influence as indicating high supervision risk for a specific reward dimension, which may arise from noisy or hard data.

Formally, we define the Dimension-Specific Self-Influence $\mathcal{S}_k(z)$ as a proxy for the supervision risk of sample $z$ with respect to the $k$-th reward dimension. 
It is obtained by restricting the influence analysis to the diagonal term $\phi_{k,k}(z,z)$:
\begin{equation} 
\label{eq:self_influence}
    \mathcal{S}_k(z) := \phi_{k,k}(z, z)
    = \nabla_{\boldsymbol{\theta}} \mathcal{L}_k(z; \hat{\boldsymbol{\theta}})^\top 
    \mathbf{H}_{\hat{\boldsymbol{\theta}}}^{-1} 
    \nabla_{\boldsymbol{\theta}} \mathcal{L}_k(z; \hat{\boldsymbol{\theta}})
\end{equation}

\paragraph{Practical Approximation.}
While Eq.~\eqref{eq:self_influence} is expressed using the inverse Hessian for theoretical clarity, explicitly computing $\mathbf{H}_{\hat{\boldsymbol{\theta}}}^{-1}$ is intractable at scale.
In practice, we adopt a Hessian-free approximation~\citep{pruthi2020estimating}, which preserves the relative ranking of samples within each dimension for efficient pruning and reweighting, while reducing the complexity of dimension-wise influence estimation from $\mathcal{O}(K^2)$ to $\mathcal{O}(K)$.

\subsection{Dimension-Disentangled Pruning Strategy}
\label{sec:method_pruning}

\paragraph{Goal of Disentangled Pruning.} 
Standard pruning methods typically filter data based on a global scalar metric (e.g., total loss), implicitly assuming that supervision reliability is uniform across evaluation axes. 
However, \textit{Dimensional Heterogeneity} suggests that a sample may exhibit high supervision risk in a specific reward dimension while appearing reliable in others. 
We therefore propose \textbf{Dimension-Disentangled Pruning (DDP)}, a hard filtering strategy that removes samples in the extreme high-risk tail of any reward dimension.

Let $\mathcal{D} = \{z_i\}_{i=1}^N$ be the full dataset. DDP constructs a refined subset $\mathcal{D}_{\mathrm{refined}} \subset \mathcal{D}$:
\begin{equation} \label{eq:subset_risk}
    \min_{\boldsymbol{\theta}} \frac{1}{|\mathcal{D}_{\mathrm{refined}}|} 
    \sum_{z_i \in \mathcal{D}_{\mathrm{refined}}} 
    \sum_{k=1}^K \mathcal{L}_k(z_i; \boldsymbol{\theta})
\end{equation}
where $\mathcal{D}_{\mathrm{refined}} = \bigcap_{k=1}^K \mathcal{V}_k$, and $\mathcal{V}_k$ denotes samples not identified as high-risk for the $k$-th dimension.

\paragraph{Pruning Criterion via Disentangled Influence.}
Using the self-influence score $\mathcal{S}_k(z_i)$ in Eq.~\eqref{eq:self_influence}, we define the high-risk set for reward dimension $k$ as:
\begin{equation}
    \mathcal{R}_k = \left\{ z_i \in \mathcal{D} \mid \mathcal{S}_k(z_i) > \tau_k \right\},
\end{equation}
where $\tau_k$ is the top-$\rho$ percentile threshold of self-influence scores for dimension $k$.
A sample is pruned if it belongs to any dimension-wise high-risk set:
\begin{equation}
    \mathcal{R}_{\mathrm{total}} = \bigcup_{k=1}^K \mathcal{R}_k 
    = \left\{ z_i \mid \exists k, \mathcal{S}_k(z_i) > \tau_k \right\}.
\end{equation}
The final training dataset is $\mathcal{D}_{\mathrm{refined}} = \mathcal{D} \setminus \mathcal{R}_{\mathrm{total}}$.
This policy targets extreme dimension-specific supervision risk rather than attempting to explicitly separate label noise from difficult-but-useful samples.
The detailed procedure is summarized in Algorithm~\ref{alg:rdap}.

\subsection{Dimension-Disentangled Reweighting Strategy}
\label{sec:method_reweighting}

\paragraph{Goal of Dimension-Disentangled Reweighting.}
While DDP applies hard filtering to the extreme high-risk tail, it may discard difficult but informative samples when high self-influence does not necessarily correspond to label noise.
To provide a softer alternative, we propose \textbf{Dimension-Disentangled Reweighting (DDR)}, which continuously adjusts the contribution of high-risk supervision instead of removing samples.
Standard reweighting methods often normalize weights across dimensions within a sample, creating a ``zero-sum'' competition where down-weighting one dimension inadvertently up-weights others.
In contrast, DDR evaluates each reward dimension independently relative to its global self-influence distribution, so that the weight of each dimension is determined by its own supervision risk without interference from other dimensions.

\paragraph{Reweighting via Self-influence.}
We leverage the self-influence scores $\mathcal{S}_k(z_i)$ as a proxy for dimension-specific supervision risk.
To handle the varying scales of self-influence across different heads, we first perform a dimension-wise z-score normalization.
Let $\mu_k$ and $\sigma_k$ be the mean and standard deviation of the self-influence scores for dimension $k$ across the entire dataset. The normalized risk score $\hat{\mathcal{S}}_{i,k}$ is computed as:
\begin{equation}
    \hat{\mathcal{S}}_{i,k} = \frac{\mathcal{S}_k(z_i) - \mu_k}{\sigma_k + \epsilon}
\end{equation}
We then apply an independent sigmoidal decay function to map these scores to training weights $w_{i,k}$:
\begin{equation} \label{eq:sigmoid_weight}
    w_{i,k} = \frac{1}{1 + \exp\left( \frac{\hat{\mathcal{S}}_{i,k}}{\tau} \right)}
\end{equation}
where $\tau$ is a temperature hyperparameter controlling the sharpness of the transition.
This formulation imposes a monotonic penalty: dimensions with higher relative self-influence are smoothly down-weighted, reducing their optimization dominance while still allowing them to contribute to training.
Finally, to maintain the effective learning rate, we rescale the weight matrix $\mathbf{W}$ such that the global mean weight equals 1.0.
The implementation of DDR is detailed in Algorithm~\ref{alg:rdar}.

\section{Experiments}
\label{sec:experiments}

In this section, we evaluate the proposed Disentangled Influence Function framework through two dimension-resolved applications: influence-guided pruning for filtering samples that negatively affect specific evaluation dimensions, and dimension reweighting for modulating sample contributions under heterogeneous multi-dimension supervision.

\textbf{Base Model and Architecture.}
We construct our framework upon VideoScore~\citep{he2024videoscore}, which utilizes Mantis-Idefics2-8B~\citep{jiangmantis} as the backbone.
Unlike the generative scoring mode, we adopt the \textbf{regression scoring mode}~\citep{he2024videoscore} where a linear regression head maps the backbone features to 5 distinct scalar rewards. 
Our influence computation focuses on the parameters of this regression head to ensure computational efficiency while capturing dimension-specific semantics.

\textbf{Experimental Setup and Implementation.}
To comprehensively evaluate our methods, we compare against two categories of baselines: (1) \textbf{Adaptive Balancing Strategies} (including Uncertainty Weighting~\citep{kendall2018multi}, RLW~\citep{linreasonable}, and PCGrad~\citep{yu2020gradient}), which balance optimization at the objective level; and (2) \textbf{Data Pruning Strategies} based on per-dimension loss, per-dimension uncertainty~\citep{harrison2024variational}, and TracIn Influence~\citep{pruthi2020estimating}, where the latter serves as a ablation to validate the necessity of disentangled analysis. The dataset and training details can be found in Appendix~\ref{traindetails}.

\begin{table*}[h!]
    \centering
    \caption{\textbf{Main Results: Method Comparison across Five Dimensions.} 
    We report Spearman correlation ($\times 100$) on the test set across five evaluation dimensions. 
    For all pruning experiments, we report results using the optimal pruning ratio selected via validation dataset. (Appendix~\ref{app:pruning_ratio_selection}). 
    Bold numbers indicate the best performance within each column.}
    \label{tab:main_results_merged}
    
    \setlength{\tabcolsep}{5pt}
    \renewcommand{\arraystretch}{0.95}

    \resizebox{0.95\textwidth}{!}{
    \begin{tabular}{l l ccccc c}
        \toprule
        \multicolumn{2}{c}{\multirow{2}{*}{\textbf{Method}}} & \multicolumn{5}{c}{\textbf{Dimension Performance}} & \multirow{2}{*}{\textbf{Avg.}} \\
        \cmidrule(lr){3-7}
        \multicolumn{2}{c}{} & \textbf{Visual} & \textbf{Temporal} & \textbf{Dynamic} & \textbf{Text} & \textbf{Factual} & \\
        \midrule
        
        \textbf{Baseline} & Equal Weighting & 81.16 & 73.55 & 57.08 & 50.61 & 78.91 & 68.26 \\
        \midrule
        
        \multirow{3}{*}{\shortstack[l]{\textbf{Adaptive}\\\textbf{Balancing}}} 
        & Uncertainty Weighting \cite{kendall2018multi} & 81.47 & 74.15 & 56.70 & 49.60 & 79.15 & 68.21 \\
        & RLW \cite{linreasonable} & 82.05 & 73.63 & 56.80 & 48.80 & 78.58 & 67.97 \\
        & \textbf{Ours (DDR)} & \textbf{82.24} & \textbf{74.24} & \textbf{57.59} & \textbf{52.49} & \textbf{80.36} & \textbf{69.38} \\
        \midrule
        
        \multirow{4}{*}{\shortstack[l]{\textbf{Data}\\\textbf{Pruning}}} 
        & TracIn \cite{pruthi2020estimating} & 80.46 & 72.93 & 57.24 & 50.18 & 78.20 & 67.80 \\
        & High-Loss & 81.36 & 74.15 & 57.93 & 50.76 & 79.18 & 68.67 \\
        & Uncertainty & 80.07 & 73.81 & 55.34 & 50.81 & 77.41 & 67.49 \\
        & \textbf{Ours (DDP)} & \textbf{82.62} & \textbf{75.09} & \textbf{59.26} & \textbf{51.58} & \textbf{79.28} & \textbf{69.57} \\
        \bottomrule
    \end{tabular}
    }
    
    \begin{minipage}{0.95\textwidth} 
        \vspace{1mm} 
        \footnotesize 
        \textit{Note:} PCGrad~\citep{yu2020gradient} is not reported due to GPU memory constraints arising from explicit per-dimension gradient computation at scale.
    \end{minipage}
    
\end{table*}

\subsection{Main Results}
\label{subsec:main_results}

Table~\ref{tab:main_results_merged} presents a comprehensive comparison across five evaluation dimensions, including \textit{Visual Quality}, \textit{Temporal Consistency}, \textit{Dynamic Degree}, \textit{Text-to-Video Alignment}, and \textit{Factual Consistency}, measured by Spearman correlation ($\times 100$) on the test set.

\textbf{Effectiveness of DDR (Reweighting).}
As shown in Table~\ref{tab:main_results_merged}, \textbf{DDR} achieves the best average performance among adaptive balancing methods.
Unlike objective-level balancing strategies, which treat each reward dimension as a whole, DDR models sample-level variability within each dimension.
By softly down-weighting samples with high dimension-specific supervision risk, DDR improves the reward model across all five axes.

\textbf{Effectiveness of DDP (Pruning).}
\textbf{DDP} achieves the highest average Spearman correlation among all compared methods while maintaining competitive performance on each dimension.
Compared with heuristic dimension-aware pruning methods such as \textit{Loss-based} and \textit{Uncertainty-based} filtering, DDP provides more consistent gains.
This suggests that self-influence offers a more informative supervision-risk signal.

\textbf{Ablation Study: The Necessity of Disentanglement.}
To isolate the effect of disentanglement, we compare our method with TracIn using the same pruning protocol.
The only difference is the pruning metric: TracIn ranks samples by scalar self-influence, whereas our method ranks them by dimension-wise self-influence.
All other experiment settings are kept identical.
The improvement of our method therefore indicates that disentangled supervision-risk estimation, rather than additional pruning heuristics, accounts for the gain.

\textbf{Ablation Study: Diagonal Projection.}
The full disentangled influence matrix contains both diagonal and off-diagonal terms.
However, for data refinement, we require a clear scalar supervision-risk score for each reward dimension.
We therefore compare our diagonal-only criterion with a per-row-sum variant that aggregates both diagonal and off-diagonal terms for each dimension.
\begin{wraptable}{r}{0.46\textwidth}
    \centering
    \vspace{-10pt}
    \caption{
    Ablation of using per-row-sum influence, which aggregates both diagonal and off-diagonal terms in the disentangled influence matrix.
    Spearman correlations are reported.
    }
    \label{tab:row_sum_ablation}
    \resizebox{0.46\textwidth}{!}{
    \begin{tabular}{lccccc}
        \toprule
        \textbf{Method} & \textbf{VQ} & \textbf{TC} & \textbf{DD} & \textbf{TVA} & \textbf{FC} \\
        \midrule
        DDR w/ row-sum & 81.91 & 73.75 & 57.74 & 50.56 & 78.45 \\
        DDP w/ row-sum & 81.86 & 74.77 & 59.07 & 51.45 & 78.57 \\
        \bottomrule
    \end{tabular}
    }
    \vspace{-10pt}
\end{wraptable}
As shown in Table~\ref{tab:row_sum_ablation}, the per-row-sum variant performs worse than our diagonal-only design in Table~\ref{tab:main_results_merged}.
This indicates that off-diagonal terms do not provide a more effective signal for pruning or reweighting.
We attribute this to the fact that off-diagonal terms primarily reflect cross-dimensional optimization coupling under shared parameters, while diagonal self-influence more directly captures the supervision risk of the reward dimension.

\begin{wraptable}{r}{0.52\textwidth}
    \vspace{-10pt}
    \centering
    \caption{\textbf{GenAI-Bench Performance.} We evaluate generalization using Pairwise Accuracy (\%).}
    \label{tab:genai_results}
    \resizebox{0.50\textwidth}{!}{
    \begin{tabular}{lccc}
        \toprule
        \textbf{Metric} & \textbf{Original} & \textbf{DDP} & \textbf{DDR} \\
        \midrule
        GenAI-Bench Acc. & 64.27 & \textbf{70.16} & 69.22 \\
        \bottomrule
    \end{tabular}
    }
    \vspace{-5pt}
\end{wraptable}
\textbf{GenaiBench}
To evaluate out-of-distribution robustness, we test on GenAI-Bench~\citep{jiang2024genai}, an external benchmark for video generation assessment.
As shown in Table~\ref{tab:genai_results}, both DDP and DDR improve pairwise accuracy over the original VideoScore baseline (64.27\% $\rightarrow$ 70.16\% and 69.22\%).
These results suggest that our disentangled data refinement improves reward-model generalization beyond the training distribution.

Please refer to Appendix~\ref{sec:generalization} for generalization analysis on various model architectures,~\ref{app:stability} for error bar experiment, and ~\ref{sec:efficiency} for efficiency analysis.

\subsection{Sensitivity Analysis}

\begin{figure*}[th!]
    \centering
    \includegraphics[width=\textwidth]{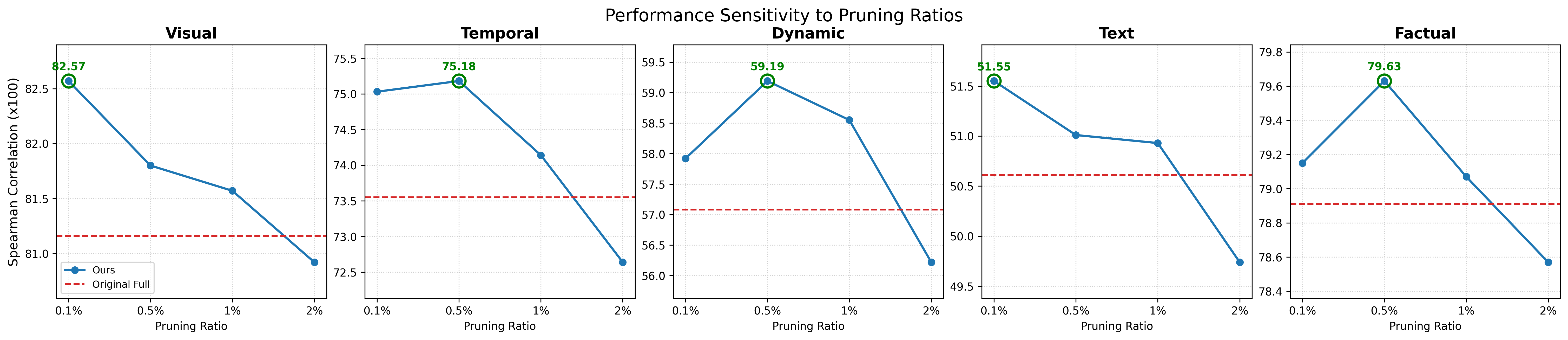} 
    \caption{We analyze the impact of removing the top-$k\%$ high-self-influence samples across five evaluation dimensions. 
    The \textcolor{blue}{\textbf{blue solid lines}} denote the performance of our method, while the \textcolor{red}{\textbf{red dashed lines}} represent the baseline performance of the model trained on the full original dataset. 
    \textcolor{green}{\textbf{Green circles}} highlight the peak performance for each dimension.
    Moderate pruning based on disentangled self-influence improves performance over the baseline, with peak ratios at 0.1\% for Visual/Text and 0.5\% for the other dimensions, while excessive pruning degrades performance.}
    \label{fig:pruning_curve}
\end{figure*}

\textbf{Sensitivity Analysis.}
Figure~\ref{fig:pruning_curve} shows the effect of different pruning ratios ($k \in \{0.1\%, 0.5\%, 1\%, 2\%\}$) on performance across reward dimensions. 
We observe clear dimension-dependent sensitivity: \textit{Visual Quality} and \textit{Text Alignment} peak at a conservative ratio ($k=0.1\%$), while \textit{Dynamic Degree}, \textit{Temporal Consistency}, and \textit{Factual Consistency} favor a more aggressive setting ($k=0.5\%$). 
When $k$ becomes too large, performance falls below the baseline (red dashed line), indicating that excessive pruning removes informative samples. 
These results support the non-uniform distribution of data quality across dimensions and highlight the utility of our method.

Please refer to Appendix~\ref{app:ddr_temperature} for the sensitivity experiment of hyperparameter $\tau$, ~\ref{app:ddp_overlap} for actual union removal size of DDP and ~\ref{app:pruning_ratio_selection} for the sensitivity analysis on divided validation dataset.

\begin{figure*}[t!]
    \centering

    \begin{subfigure}[t]{0.32\linewidth}
        \centering
        \includegraphics[width=\linewidth]{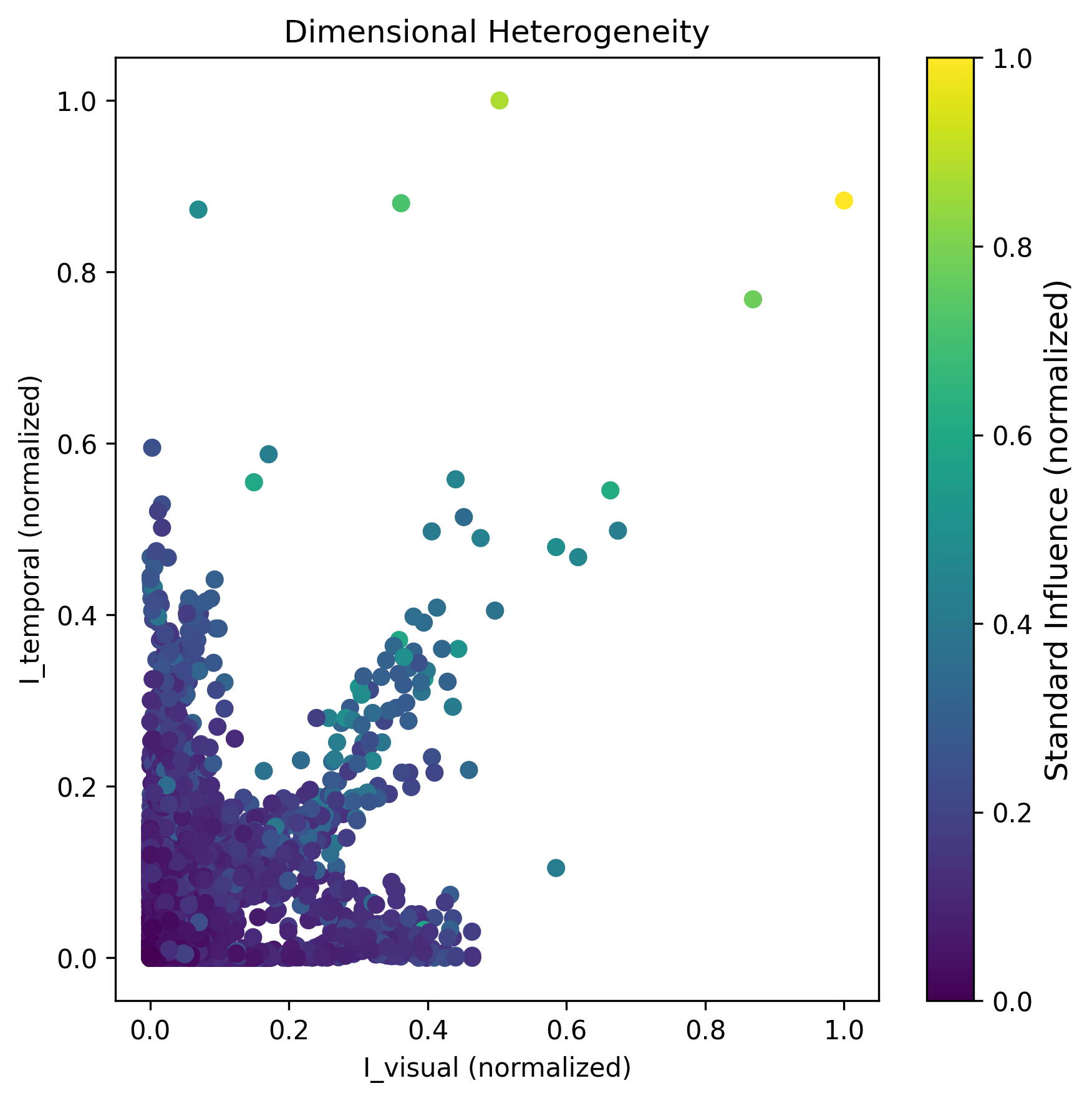}
        \caption{Visual vs Temporal}
    \end{subfigure}
    \hfill
    \begin{subfigure}[t]{0.32\linewidth}
        \centering
        \includegraphics[width=\linewidth]{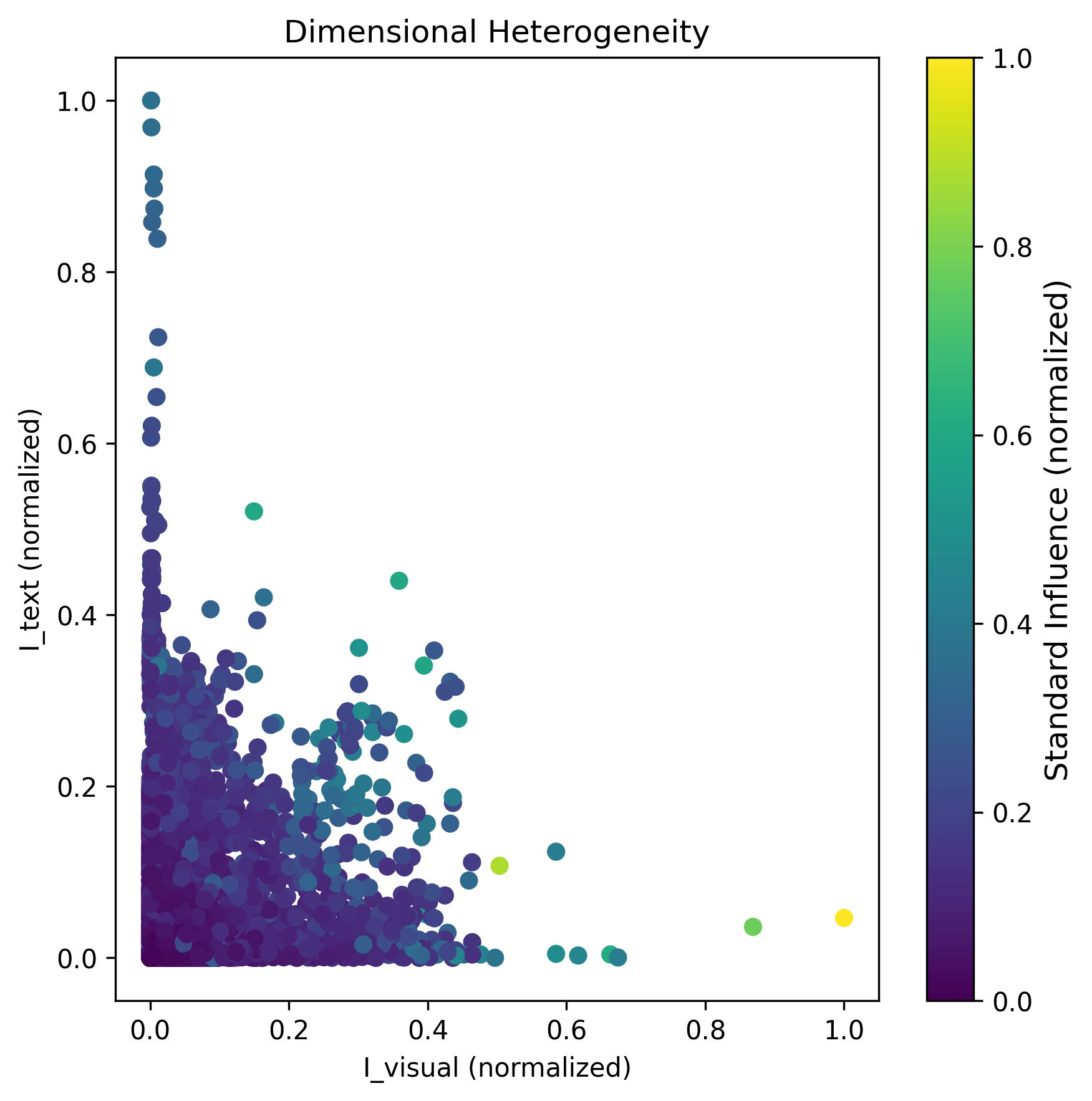}
        \caption{Visual vs Text Align}
    \end{subfigure}
    \hfill
    \begin{subfigure}[t]{0.32\linewidth}
        \centering
        \includegraphics[width=\linewidth]{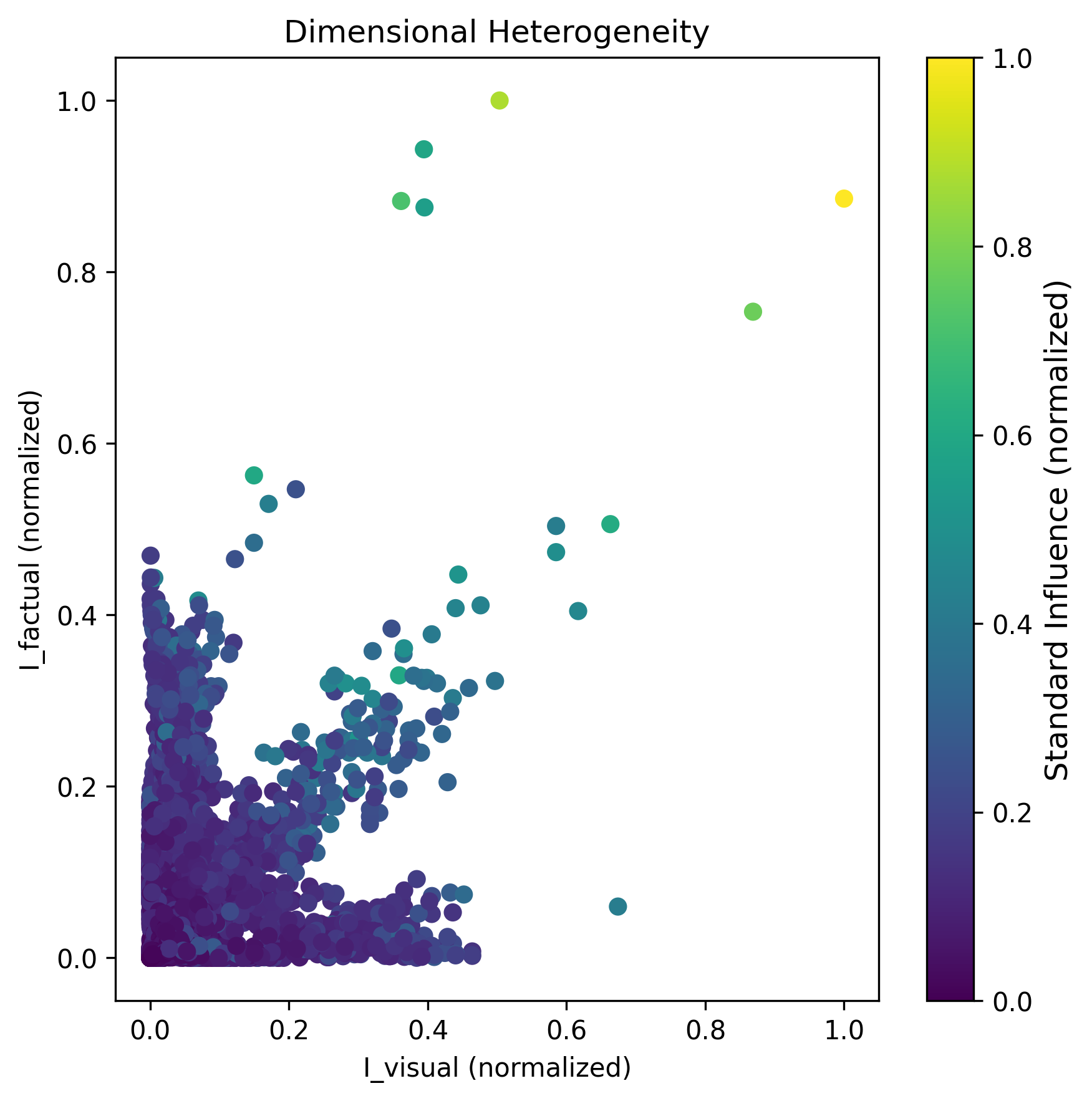}
        \caption{Visual vs Factual Consistency}
    \end{subfigure}

    \caption{\textbf{Dimensional Heterogeneity across reward dimensions.} 
    We visualize the relationship between \textit{Visual Quality} influence (x-axis) and three other reward dimensions (y-axis): (a) Temporal Consistency, (b) Text Alignment, and (c) Factual Consistency.
    The color indicates the normalized \textbf{Standard Influence} (Total Loss).
    The distinct ``L-shaped'' distribution reveals that supervision reliability is disentangled. 
    Crucially, many samples with high reward-disentangled influence (high values on y-axis) exhibit low Total Loss (represented by blue/purple colors), demonstrating that \textit{Standard Influence} fails to capture dimension-specific failure patterns (the masking effect).}
    \label{fig:visual_multi_scatter}
\end{figure*}

\subsection{Visualization of Dimensional Heterogeneity}
\label{sec:vis_analysis}

\noindent\textbf{Experimental Setup.} 
To validate \textit{Dimensional Heterogeneity}, we visualize pairwise distributions of normalized self-influence scores between \textit{Visual Quality} and other dimensions, including Temporal Consistency, Text-to-Video Alignment, and Factual Consistency, in Fig.~\ref{fig:visual_multi_scatter}. 
The color of each point denotes the standard influence computed from the aggregated total loss.

\textbf{Observation: The ``L-Shaped'' Distribution.}
In Fig.~\ref{fig:visual_multi_scatter}, the points form ``L-shaped'' patterns rather than diagonal clusters, indicating weak correlation between dimension-wise self-influence scores.
For example, a sample may have high self-influence on \textit{Text Alignment}, which we interpret as high supervision risk, while having low self-influence on \textit{Visual Quality}.
This suggests that supervision reliability is dimension-specific rather than uniformly reliable or unreliable across dimensions.

\textbf{Analysis: The Masking Effect.}
The color distribution further reveals the limitation of global aggregation.
Many points with low aggregated influence (blue) still lie in high-risk regions of individual dimensions, such as the high factual-consistency region in Fig.~\ref{fig:visual_multi_scatter}(c).
This indicates a \textbf{Masking Effect}: aggregated scalar influence can hide dimension-specific supervision risk by mixing signals across dimensions.
As a result, global pruning may retain samples with unreliable supervision in specific reward dimensions.
Full pairwise visualizations are provided in Appendix~\ref{appendix:fullpair}.

\subsection{Qualitative Analysis: Uncovering Label Noise in Semantic Alignment}
\label{sec:qualitative_analysis}

\noindent\textbf{Experimental Setup and Selection Criteria.}
To provide an intuitive understanding of Dimensional Heterogeneity, we focus our qualitative analysis on \textbf{Text-to-Video (T2V) Alignment}.
We specifically select samples located in the off-diagonal region of the influence distribution: those exhibiting extremely high T2V self-influence but low visual self-influence.
This intersection identifies samples where the visual generation is stable (low visual self-influence), but the semantic head significantly diverges from the ground-truth label (high semantic self-influence).

\begin{figure*}[th!]
    \centering
    \includegraphics[width=1\linewidth]{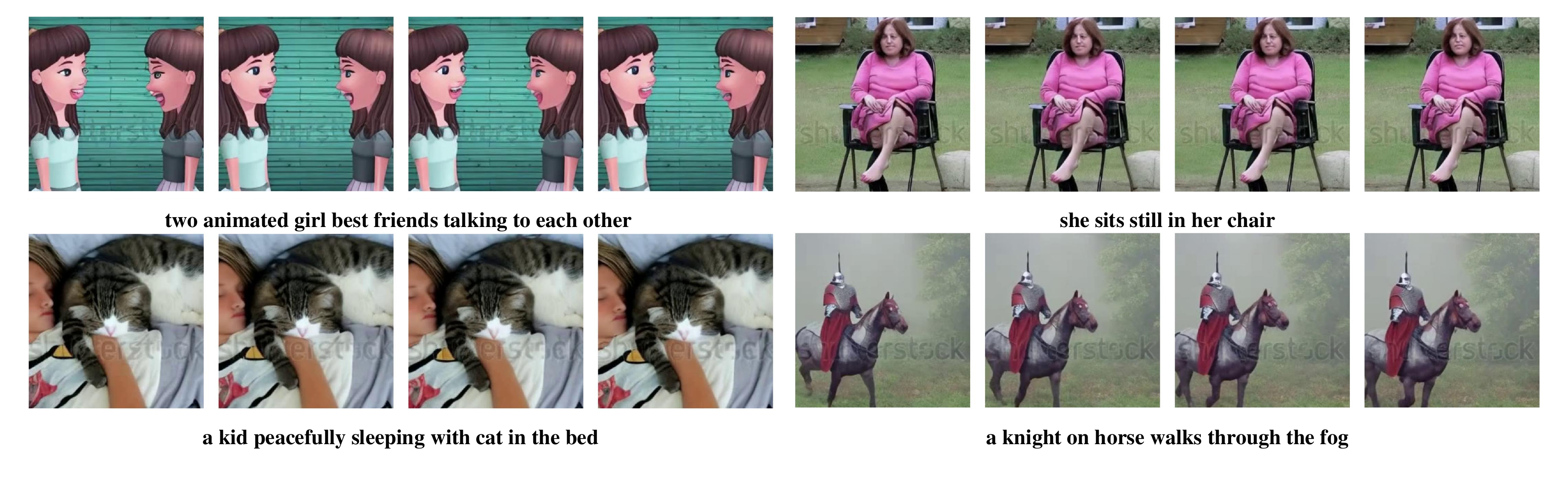} 
    \caption{\textbf{Qualitative Visualization of Label Noise in T2V Alignment.} 
    We show samples with \textbf{high T2V self-influence} but \textbf{low visual self-influence}. 
    Although they are labeled as \textbf{Score 1 (Worst Alignment)} in the training set, human re-evaluation rates them as Score 4, as the videos match substantial parts of their prompts, such as ``a knight on horse'' or ``kid sleeping with cat''. 
    This confirms label noise in the T2V alignment dimension for these selected samples.}
    \label{fig:qualitative_vis}
\end{figure*}

\textbf{Observation: Overly Strict Annotation Standards.}
Fig.~\ref{fig:qualitative_vis} shows representative samples labeled as Score 1 but re-evaluated by human inspectors as Score 4.
Among the top 1\% samples ranked by T2V self-influence, 84.8\% are assigned the lowest ground-truth label.
Manual inspection reveals a common pattern: these videos match most of the textual description but are still assigned the lowest alignment score.
This indicates that high T2V self-influence effectively prioritizes samples with label noise or overly strict annotations for human inspection.
Additional label-noise visualizations are provided in Appendix~\ref{app:vis_more}.

\subsection{Application: Human-in-the-Loop Label Correction}
\label{sec:label_correction}
\vspace{-1mm}
To demonstrate the practical utility of our Disentangled Self-Influence as a tool for prioritizing high-leverage label noise, we conduct a human-in-the-loop annotation refinement experiment.
We find that samples with exceptionally high self-influence scores often correspond to cases where the ground-truth label conflicts with the video content, as confirmed by human inspection.

\begin{wraptable}{r}{0.56\textwidth}
    \vspace{-10pt}
    \centering
    \caption{\textbf{Label correction results.} Spearman correlation ($\rho \times 100$) after checking the top 1\% T2V labels identified by our T2V self-influence score. The gray column highlights the improved Text alignment performance.}
    \label{tab:correction}

    \resizebox{0.54\textwidth}{!}{
    \begin{tabular}{l cccc | c}
        \toprule
        \textbf{Method} & \textbf{Vis.} & \textbf{Temp.} & \textbf{Dyn.} & \textbf{Fact.} & \textbf{Text} \\
        \midrule
        Baseline & 81.16 & 73.55 & 57.08 & 78.91 & \cellcolor{gray!15}50.61 \\
        \midrule
        Correction & \textbf{81.48} & 73.27 & 56.90 & \textbf{79.21} & \cellcolor{gray!15}\textbf{51.62} \\
        $\Delta$ & \textit{+0.32} & \textit{-0.28} & \textit{-0.18} & \textit{+0.30} & \cellcolor{gray!15}\textbf{\textit{+1.01}} \\
        \bottomrule
    \end{tabular}
    }
    \vspace{-10pt}
\end{wraptable}

\textbf{Experimental Setup.}
We sort the training data by the self-influence score of the \textit{T2V Alignment} dimension.
We select the top 1\% of samples with the highest T2V self-influence and manually re-examine their labels.
Consistent with our qualitative observation, human inspection shows that a significant portion of these samples are under-scored.
We then correct these labels according to human judgment.

\textbf{Results.}
Table~\ref{tab:correction} reports the performance comparison.
Retraining with our correction yields a improvement of \textbf{+1.01\%} in T2V Alignment correlation compared to the baseline.
Crucially, this gain is achieved without affecting the performance of other dimensions significantly, validating that our method accurately targets noise.

\section{Conclusion}
\label{sec:conclusion}
\vspace{-1mm}
In this paper, we identify \textit{Dimensional Heterogeneity} in MVRMs: supervision reliability can vary substantially across reward dimensions and may be masked by global scalar metrics.
To analyze this phenomenon, we introduce a disentangled influence framework that uses diagonal projection to estimate dimension-specific self-influence as a supervision-risk signal.
Based on this analysis, we develop two dataset refinement strategies, Dimension-Disentangled Pruning and Reweighting, for hard filtering and soft reweighting.
Experiments show that our approach improves reward-model alignment with human preferences over global filtering baselines.

Our qualitative and label-correction analyses further show that high self-influence can prioritize samples with annotation inconsistencies, often arising from overly rigid labeling heuristics and the open-world semantics captured by pre-trained VLMs.
These findings suggest that dimension-wise influence is not only useful for data refinement, but also for improving annotation standards in multidimensional video evaluation.
Overall, this work provides a practical data-centric paradigm for training more reliable MVRMs and supporting fine-grained evaluation of video generation.

\normalem
\bibliographystyle{unsrt}
\bibliography{icml2026}

@article{unterthiner2018towards,
  title={Towards accurate generative models of video: A new metric \& challenges},
  author={Unterthiner, Thomas and Van Steenkiste, Sjoerd and Kurach, Karol and Marinier, Raphael and Michalski, Marcin and Gelly, Sylvain},
  journal={arXiv preprint arXiv:1812.01717},
  year={2018}
}

@article{min2025understanding,
  title={Understanding Impact of Human Feedback via Influence Functions},
  author={Min, Taywon and Lee, Haeone and Kwon, Yongchan and Lee, Kimin},
  journal={arXiv preprint arXiv:2501.05790},
  year={2025}
}

@article{jiang2024genai,
  title={Genai arena: An open evaluation platform for generative models},
  author={Jiang, Dongfu and Ku, Max and Li, Tianle and Ni, Yuansheng and Sun, Shizhuo and Fan, Rongqi and Chen, Wenhu},
  journal={Advances in Neural Information Processing Systems},
  volume={37},
  pages={79889--79908},
  year={2024}
}

@article{wang2023modelscope,
  title={Modelscope text-to-video technical report},
  author={Wang, Jiuniu and Yuan, Hangjie and Chen, Dayou and Zhang, Yingya and Wang, Xiang and Zhang, Shiwei},
  journal={arXiv preprint arXiv:2308.06571},
  year={2023}
}

@article{kong2024hunyuanvideo,
  title={Hunyuanvideo: A systematic framework for large video generative models},
  author={Kong, Weijie and Tian, Qi and Zhang, Zijian and Min, Rox and Dai, Zuozhuo and Zhou, Jin and Xiong, Jiangfeng and Li, Xin and Wu, Bo and Zhang, Jianwei and others},
  journal={arXiv preprint arXiv:2412.03603},
  year={2024}
}

@article{team2025kling,
  title={Kling-Omni Technical Report},
  author={Team, Kling and Chen, Jialu and Ci, Yuanzheng and Du, Xiangyu and Feng, Zipeng and Gai, Kun and Guo, Sainan and Han, Feng and He, Jingbin and He, Kang and others},
  journal={arXiv preprint arXiv:2512.16776},
  year={2025}
}

@article{liu2023fetv,
  title={Fetv: A benchmark for fine-grained evaluation of open-domain text-to-video generation},
  author={Liu, Yuanxin and Li, Lei and Ren, Shuhuai and Gao, Rundong and Li, Shicheng and Chen, Sishuo and Sun, Xu and Hou, Lu},
  journal={Advances in Neural Information Processing Systems},
  volume={36},
  pages={62352--62387},
  year={2023}
}

@inproceedings{liu2024evalcrafter,
  title={Evalcrafter: Benchmarking and evaluating large video generation models},
  author={Liu, Yaofang and Cun, Xiaodong and Liu, Xuebo and Wang, Xintao and Zhang, Yong and Chen, Haoxin and Liu, Yang and Zeng, Tieyong and Chan, Raymond and Shan, Ying},
  booktitle={Proceedings of the IEEE/CVF Conference on Computer Vision and Pattern Recognition},
  pages={22139--22149},
  year={2024}
}

@inproceedings{barshan2020relatif,
  title={Relatif: Identifying explanatory training samples via relative influence},
  author={Barshan, Elnaz and Brunet, Marc-Etienne and Dziugaite, Gintare Karolina},
  booktitle={International Conference on Artificial Intelligence and Statistics},
  pages={1899--1909},
  year={2020},
  organization={PMLR}
}

@article{liu2025improving,
  title={Improving video generation with human feedback},
  author={Liu, Jie and Liu, Gongye and Liang, Jiajun and Yuan, Ziyang and Liu, Xiaokun and Zheng, Mingwu and Wu, Xiele and Wang, Qiulin and Xia, Menghan and Wang, Xintao and others},
  journal={arXiv preprint arXiv:2501.13918},
  year={2025}
}

@article{wu2024boosting,
  title={Boosting text-to-video generative model with mllms feedback},
  author={Wu, Xun and Huang, Shaohan and Wang, Guolong and Xiong, Jing and Wei, Furu},
  journal={Advances in Neural Information Processing Systems},
  volume={37},
  pages={139444--139469},
  year={2024}
}

@article{george2018fast,
  title={Fast approximate natural gradient descent in a kronecker factored eigenbasis},
  author={George, Thomas and Laurent, C{\'e}sar and Bouthillier, Xavier and Ballas, Nicolas and Vincent, Pascal},
  journal={Advances in neural information processing systems},
  volume={31},
  year={2018}
}

@article{ziegler2019fine,
  title={Fine-tuning language models from human preferences},
  author={Ziegler, Daniel M and Stiennon, Nisan and Wu, Jeffrey and Brown, Tom B and Radford, Alec and Amodei, Dario and Christiano, Paul and Irving, Geoffrey},
  journal={arXiv preprint arXiv:1909.08593},
  year={2019}
}

@article{linreasonable,
  title={Reasonable Effectiveness of Random Weighting: A Litmus Test for Multi-Task Learning},
  author={Lin, Baijiong and Ye, Feiyang and Zhang, Yu and Tsang, Ivor},
  journal={Transactions on Machine Learning Research}
}

@article{yu2020gradient,
  title={Gradient surgery for multi-task learning},
  author={Yu, Tianhe and Kumar, Saurabh and Gupta, Abhishek and Levine, Sergey and Hausman, Karol and Finn, Chelsea},
  journal={Advances in neural information processing systems},
  volume={33},
  pages={5824--5836},
  year={2020}
}

@article{hong2022cogvideo,
  title={Cogvideo: Large-scale pretraining for text-to-video generation via transformers},
  author={Hong, Wenyi and Ding, Ming and Zheng, Wendi and Liu, Xinghan and Tang, Jie},
  journal={arXiv preprint arXiv:2205.15868},
  year={2022}
}

@article{blattmann2023stable,
  title={Stable video diffusion: Scaling latent video diffusion models to large datasets},
  author={Blattmann, Andreas and Dockhorn, Tim and Kulal, Sumith and Mendelevitch, Daniel and Kilian, Maciej and Lorenz, Dominik and Levi, Yam and English, Zion and Voleti, Vikram and Letts, Adam and others},
  journal={arXiv preprint arXiv:2311.15127},
  year={2023}
}

@article{wan2025wan,
  title={Wan: Open and advanced large-scale video generative models},
  author={Wan, Team and Wang, Ang and Ai, Baole and Wen, Bin and Mao, Chaojie and Xie, Chen-Wei and Chen, Di and Yu, Feiwu and Zhao, Haiming and Yang, Jianxiao and others},
  journal={arXiv preprint arXiv:2503.20314},
  year={2025}
}

@inproceedings{kendall2018multi,
  title={Multi-task learning using uncertainty to weigh losses for scene geometry and semantics},
  author={Kendall, Alex and Gal, Yarin and Cipolla, Roberto},
  booktitle={Proceedings of the IEEE conference on computer vision and pattern recognition},
  pages={7482--7491},
  year={2018}
}

@article{harrison2024variational,
  title={Variational Bayesian last layers},
  author={Harrison, James and Willes, John and Snoek, Jasper},
  journal={arXiv preprint arXiv:2404.11599},
  year={2024}
}

@article{christiano2017deep,
  title={Deep reinforcement learning from human preferences},
  author={Christiano, Paul F and Leike, Jan and Brown, Tom and Martic, Miljan and Legg, Shane and Amodei, Dario},
  journal={Advances in neural information processing systems},
  volume={30},
  year={2017}
}

@article{stiennon2020learning,
  title={Learning to summarize with human feedback},
  author={Stiennon, Nisan and Ouyang, Long and Wu, Jeffrey and Ziegler, Daniel and Lowe, Ryan and Voss, Chelsea and Radford, Alec and Amodei, Dario and Christiano, Paul F},
  journal={Advances in neural information processing systems},
  volume={33},
  pages={3008--3021},
  year={2020}
}

@inproceedings{tongmj,
  title={MJ-Video: Benchmarking and Rewarding Video Generation with Fine-Grained Video Preference},
  author={Tong, Haibo and Wang, Zhaoyang and Chen, Zhaorun and Ji, Haonian and Qiu, Shi and Han, Siwei and Geng, Kexin and Xue, Zhongkai and Zhou, Yiyang and Xia, Peng and others},
  booktitle={The Thirty-ninth Annual Conference on Neural Information Processing Systems}
}

@article{xu2024visionreward,
  title={Visionreward: Fine-grained multi-dimensional human preference learning for image and video generation},
  author={Xu, Jiazheng and Huang, Yu and Cheng, Jiale and Yang, Yuanming and Xu, Jiajun and Wang, Yuan and Duan, Wenbo and Yang, Shen and Jin, Qunlin and Li, Shurun and others},
  journal={arXiv preprint arXiv:2412.21059},
  year={2024}
}

@inproceedings{lin2024video,
  title={Video-llava: Learning united visual representation by alignment before projection},
  author={Lin, Bin and Ye, Yang and Zhu, Bin and Cui, Jiaxi and Ning, Munan and Jin, Peng and Yuan, Li},
  booktitle={Proceedings of the 2024 conference on empirical methods in natural language processing},
  pages={5971--5984},
  year={2024}
}

@inproceedings{hessel2021clipscore,
  title={Clipscore: A reference-free evaluation metric for image captioning},
  author={Hessel, Jack and Holtzman, Ari and Forbes, Maxwell and Le Bras, Ronan and Choi, Yejin},
  booktitle={Proceedings of the 2021 conference on empirical methods in natural language processing},
  pages={7514--7528},
  year={2021}
}

@inproceedings{huang2024vbench,
  title={Vbench: Comprehensive benchmark suite for video generative models},
  author={Huang, Ziqi and He, Yinan and Yu, Jiashuo and Zhang, Fan and Si, Chenyang and Jiang, Yuming and Zhang, Yuanhan and Wu, Tianxing and Jin, Qingyang and Chanpaisit, Nattapol and others},
  booktitle={Proceedings of the IEEE/CVF Conference on Computer Vision and Pattern Recognition},
  pages={21807--21818},
  year={2024}
}

@article{jiangmantis,
  title={Mantis: Interleaved Multi-Image Instruction Tuning},
  author={Jiang, Dongfu and He, Xuan and Zeng, Huaye and Wei, Cong and Ku, Max and Liu, Qian and Chen, Wenhu},
  journal={Transactions on Machine Learning Research}
}

@inproceedings{he2024videoscore,
  title={Videoscore: Building automatic metrics to simulate fine-grained human feedback for video generation},
  author={He, Xuan and Jiang, Dongfu and Zhang, Ge and Ku, Max and Soni, Achint and Siu, Sherman and Chen, Haonan and Chandra, Abhranil and Jiang, Ziyan and Arulraj, Aaran and others},
  booktitle={Proceedings of the 2024 Conference on Empirical Methods in Natural Language Processing},
  pages={2105--2123},
  year={2024}
}

@article{park2023trak,
  title={Trak: Attributing model behavior at scale},
  author={Park, Sung Min and Georgiev, Kristian and Ilyas, Andrew and Leclerc, Guillaume and Madry, Aleksander},
  journal={arXiv preprint arXiv:2303.14186},
  year={2023}
}

@article{tan2024data,
  title={Data pruning via moving-one-sample-out},
  author={Tan, Haoru and Wu, Sitong and Du, Fei and Chen, Yukang and Wang, Zhibin and Wang, Fan and Qi, Xiaojuan},
  journal={Advances in Neural Information Processing Systems},
  volume={36},
  year={2024}
}

@article{pruthi2020estimating,
  title={Estimating training data influence by tracing gradient descent},
  author={Pruthi, Garima and Liu, Frederick and Kale, Satyen and Sundararajan, Mukund},
  journal={Advances in Neural Information Processing Systems},
  volume={33},
  pages={19920--19930},
  year={2020}
}

@inproceedings{koh2017understanding,
  title={Understanding black-box predictions via influence functions},
  author={Koh, Pang Wei and Liang, Percy},
  booktitle={International conference on machine learning},
  pages={1885--1894},
  year={2017},
  organization={PMLR}
}

\newpage
\appendix
\onecolumn

\section{Proof of Theorem \ref{thm:disentangled_influence}}
\label{app:proof_thm1}

In this section, we provide the detailed derivation of the Disentangled Influence Decomposition presented in Theorem \ref{thm:disentangled_influence}.

\subsection{Preliminaries: Influence Functions}

First, we briefly recall the standard derivation of influence functions~\citep{koh2017understanding} to establish the relationship between data upweighting and parameter updates.

Let $\mathcal{R}(\boldsymbol{\theta}) = \frac{1}{N} \sum_{i=1}^N \mathcal{L}(z_i; \boldsymbol{\theta})$ be the empirical risk. Let $\hat{\boldsymbol{\theta}}$ be the empirical risk minimizer, satisfying the first-order optimality condition $\nabla_{\boldsymbol{\theta}} \mathcal{R}(\hat{\boldsymbol{\theta}}) = 0$.
Consider upweighting a specific training sample $z$ by a small scalar $\epsilon$. The perturbed objective is given by:
\begin{equation}
    \mathcal{R}_{\epsilon, z}(\boldsymbol{\theta}) = \mathcal{R}(\boldsymbol{\theta}) + \epsilon \mathcal{L}(z; \boldsymbol{\theta}).
\end{equation}
Let $\hat{\boldsymbol{\theta}}_{\epsilon, z}$ be the minimizer of this perturbed objective. The optimality condition requires:
\begin{equation} \label{eq:optimality}
    \nabla_{\boldsymbol{\theta}} \mathcal{R}_{\epsilon, z}(\hat{\boldsymbol{\theta}}_{\epsilon, z}) = \nabla_{\boldsymbol{\theta}} \mathcal{R}(\hat{\boldsymbol{\theta}}_{\epsilon, z}) + \epsilon \nabla_{\boldsymbol{\theta}} \mathcal{L}(z; \hat{\boldsymbol{\theta}}_{\epsilon, z}) = 0.
\end{equation}
We perform a first-order Taylor expansion of $\nabla_{\boldsymbol{\theta}} \mathcal{R}(\cdot)$ around the original optimal parameter $\hat{\boldsymbol{\theta}}$:
\begin{equation}
    \nabla_{\boldsymbol{\theta}} \mathcal{R}(\hat{\boldsymbol{\theta}}_{\epsilon, z}) \approx \nabla_{\boldsymbol{\theta}} \mathcal{R}(\hat{\boldsymbol{\theta}}) + \mathbf{H}_{\hat{\boldsymbol{\theta}}} (\hat{\boldsymbol{\theta}}_{\epsilon, z} - \hat{\boldsymbol{\theta}}),
\end{equation}
where $\mathbf{H}_{\hat{\boldsymbol{\theta}}} = \nabla^2_{\boldsymbol{\theta}} \mathcal{R}(\hat{\boldsymbol{\theta}})$ is the Hessian matrix. Since $\nabla_{\boldsymbol{\theta}} \mathcal{R}(\hat{\boldsymbol{\theta}}) = 0$, substituting the expansion back into Eq.~\eqref{eq:optimality} and keeping terms to order $O(\epsilon)$ yields:
\begin{equation}
    \mathbf{H}_{\hat{\boldsymbol{\theta}}} (\hat{\boldsymbol{\theta}}_{\epsilon, z} - \hat{\boldsymbol{\theta}}) + \epsilon \nabla_{\boldsymbol{\theta}} \mathcal{L}(z; \hat{\boldsymbol{\theta}}) \approx 0.
\end{equation}
Solving for the parameter change rate $\frac{d\hat{\boldsymbol{\theta}}}{d\epsilon}$:
\begin{equation} \label{eq:param_change}
    \left. \frac{d\hat{\boldsymbol{\theta}}_{\epsilon, z}}{d\epsilon} \right|_{\epsilon=0} = -\mathbf{H}_{\hat{\boldsymbol{\theta}}}^{-1} \nabla_{\boldsymbol{\theta}} \mathcal{L}(z; \hat{\boldsymbol{\theta}}).
\end{equation}
Finally, applying the chain rule to the test loss $\mathcal{L}(z_{test}; \hat{\boldsymbol{\theta}})$, we obtain the influence on the test sample:
\begin{equation} \label{eq:influence_def}
    \mathcal{I}_{up}(z, z_{test}) := \left. \frac{d\mathcal{L}(z_{test}; \hat{\boldsymbol{\theta}}_{\epsilon, z})}{d\epsilon} \right|_{\epsilon=0} = \nabla_{\boldsymbol{\theta}} \mathcal{L}(z_{test}; \hat{\boldsymbol{\theta}})^\top \left( -\mathbf{H}_{\hat{\boldsymbol{\theta}}}^{-1} \nabla_{\boldsymbol{\theta}} \mathcal{L}(z; \hat{\boldsymbol{\theta}}) \right).
\end{equation}
For simplicity in subsequent derivations, we omit the negative sign as we focus on the magnitude and direction of interaction, following standard convention in influence analysis logic.

\subsection{Derivation of Disentangled Influence Function}

We now proceed to prove the decomposition claim. In a multidimensional reward model, the total loss function is an aggregation of $K$ distinct objective losses:
\begin{equation}
    \mathcal{L}(x; \boldsymbol{\theta}) = \sum_{k=1}^K \lambda_k \mathcal{L}_k(x; \boldsymbol{\theta}),
\end{equation}
where $\lambda_k$ are fixed weighting coefficients.

Due to the \textbf{linearity of the gradient operator}, the gradient of the total loss with respect to parameters $\boldsymbol{\theta}$ is the weighted sum of the gradients of individual objective losses.
For the test sample $z_{test}$, we have:
\begin{equation} \label{eq:grad_test}
    \nabla_{\boldsymbol{\theta}} \mathcal{L}(z_{test}; \hat{\boldsymbol{\theta}}) = \sum_{j=1}^K \lambda_j \nabla_{\boldsymbol{\theta}} \mathcal{L}_j(z_{test}; \hat{\boldsymbol{\theta}}).
\end{equation}
Similarly, for the training sample $z$:
\begin{equation} \label{eq:grad_train}
    \nabla_{\boldsymbol{\theta}} \mathcal{L}(z; \hat{\boldsymbol{\theta}}) = \sum_{k=1}^K \lambda_k \nabla_{\boldsymbol{\theta}} \mathcal{L}_k(z; \hat{\boldsymbol{\theta}}).
\end{equation}

Substituting Eq.~\eqref{eq:grad_test} and Eq.~\eqref{eq:grad_train} into the standard influence definition (Eq.~\eqref{eq:influence_def}):

\begin{equation}
\begin{split}
    \mathcal{I}_{up}(z, z_{test}) &= \left( \sum_{j=1}^K \lambda_j \nabla_{\boldsymbol{\theta}} \mathcal{L}_j(z_{test}; \hat{\boldsymbol{\theta}}) \right)^\top \mathbf{H}_{\hat{\boldsymbol{\theta}}}^{-1} \left( \sum_{k=1}^K \lambda_k \nabla_{\boldsymbol{\theta}} \mathcal{L}_k(z; \hat{\boldsymbol{\theta}}) \right) \\
    &= \sum_{j=1}^K \left[ \left( \lambda_j \nabla_{\boldsymbol{\theta}} \mathcal{L}_j(z_{test}; \hat{\boldsymbol{\theta}}) \right)^\top \mathbf{H}_{\hat{\boldsymbol{\theta}}}^{-1} \left( \sum_{k=1}^K \lambda_k \nabla_{\boldsymbol{\theta}} \mathcal{L}_k(z; \hat{\boldsymbol{\theta}}) \right) \right] \quad \text{(Distributivity over addition)} \\
    &= \sum_{j=1}^K \sum_{k=1}^K \left[ \left( \lambda_j \nabla_{\boldsymbol{\theta}} \mathcal{L}_j(z_{test}; \hat{\boldsymbol{\theta}}) \right)^\top \mathbf{H}_{\hat{\boldsymbol{\theta}}}^{-1} \left( \lambda_k \nabla_{\boldsymbol{\theta}} \mathcal{L}_k(z; \hat{\boldsymbol{\theta}}) \right) \right] \\
    &= \sum_{j=1}^K \sum_{k=1}^K \underbrace{ \lambda_j \lambda_k \nabla_{\boldsymbol{\theta}} \mathcal{L}_j(z_{test}; \hat{\boldsymbol{\theta}})^\top \mathbf{H}_{\hat{\boldsymbol{\theta}}}^{-1} \nabla_{\boldsymbol{\theta}} \mathcal{L}_k(z; \hat{\boldsymbol{\theta}}) }_{\phi_{j,k}}.
\end{split}
\end{equation}

Here, each term $\phi_{j,k}$ corresponds to the influence propagated from the $k$-th dimension of the training sample to the $j$-th dimension of the test sample via the shared parameter space (specifically, the Inverse Hessian).
Thus, the total influence is exactly the sum of all elements in the $K \times K$ interaction matrix $\boldsymbol{\mathcal{M}}(z, z_{test})$ where $\boldsymbol{\mathcal{M}}_{j,k} = \phi_{j,k}$.

This completes the proof. \hfill $\square$

\subsection{Efficiency Analysis: Closed-Form Derivation and Complexity}
\label{sec:efficiency}

A fundamental limitation of standard influence functions is the computational prohibitive cost of Hessian inversion and per-sample gradient computation over the entire parameter space. 
We circumvent this bottleneck by restricting the analysis to the last linear projection layer and deriving a closed-form solution for the Mean Squared Error (MSE) loss, which eliminates the need for backward propagation entirely.

\paragraph{Closed-Form Derivation under MSE.}
Let $f_{\boldsymbol{\theta}}(z)$ denote the reward model. We focus on the influence relative to the parameters of the final regression head.
Let $\mathbf{h}(z) \in \mathbb{R}^d$ be the feature representation (embedding) of sample $z$ produced by the backbone, and let $\mathbf{w}_k \in \mathbb{R}^d$ be the weight vector for the $k$-th evaluation dimension in the last linear layer. 
The predicted score $\hat{y}_k$ and the ground truth label $y_k$ define the dimension-specific MSE loss:
\begin{equation}
    \mathcal{L}_k(z; \mathbf{w}_k) = \frac{1}{2} \left( \hat{y}_k - y_k \right)^2 = \frac{1}{2} \left( \mathbf{w}_k^\top \mathbf{h}(z) - y_k \right)^2.
\end{equation}
Using the chain rule, the gradient of the loss with respect to the last-layer weights $\mathbf{w}_k$ is:
\begin{equation}
    \nabla_{\mathbf{w}_k} \mathcal{L}_k(z) = \left( \mathbf{w}_k^\top \mathbf{h}(z) - y_k \right) \cdot \frac{\partial (\mathbf{w}_k^\top \mathbf{h}(z))}{\partial \mathbf{w}_k} = r_k(z) \cdot \mathbf{h}(z),
\end{equation}
where $r_k(z) = \hat{y}_k - y_k$ is the scalar residual (prediction error).
Substituting this into the Self-Influence definition (Eq.~\ref{eq:self_influence}) and employing the Identity Hessian approximation ($\mathbf{H} \approx \mathbf{I}$), we obtain an explicit, computationally efficient expression:
\begin{equation} \label{eq:closed_form_influence}
    \mathcal{S}_k(z) \approx \left\| \nabla_{\mathbf{w}_k} \mathcal{L}_k(z) \right\|_2^2 = \left\| r_k(z) \cdot \mathbf{h}(z) \right\|_2^2 = r_k(z)^2 \cdot \|\mathbf{h}(z)\|_2^2.
\end{equation}
Eq.~\eqref{eq:closed_form_influence} reveals that the influence score is simply the product of the squared residual and the squared L2-norm of the feature embedding. 
Crucially, both terms are available immediately after the forward pass, bypassing the gradient backpropagation process.

\paragraph{Complexity Analysis.}
We strictly compare the computational complexity of our method against standard TracIn calculation.
Let $N$ be the number of samples, $P$ be the total number of model parameters (e.g., $P \approx 8 \times 10^9$ for VLM-based rewards), and $d$ be the hidden dimension of the last layer (e.g., $d = 4096$).

\begin{itemize}
    \item \textbf{Standard Influence (Full-Parameter):} Computing per-sample gradients $\nabla_{\boldsymbol{\theta}} \mathcal{L}(z_i)$ necessitates a full backward propagation through the deep backbone. 
    Even with efficient approximations, the computational cost is proportional to the total parameter count $P$.
    $$ \mathcal{T}_{\text{Standard}} \propto \mathcal{O}(N \cdot P_{backward}) $$
    
    \item \textbf{Ours (Last-Layer Closed-Form):} Our method relies solely on forward-pass statistics. The calculation of Eq.~\eqref{eq:closed_form_influence} involves only vector operations in the low-dimensional parameter space ($d \ll P$).
    $$ \mathcal{T}_{\text{Ours}} \propto \mathcal{O}(N \cdot d_{vector}) $$
\end{itemize}

\textbf{Conclusion: Dual-Level Acceleration.}
Our approach achieves computational efficiency through two synergistic mechanisms:
\begin{enumerate}
    \item \textbf{Parameter-Space Reduction:} By restricting the influence analysis to the last linear layer, we reduce the problem scale from the vast full-parameter space ($P \approx 8B$) to the low-dimensional feature space ($d \approx 4096$), realizing a complexity reduction of $\mathcal{O}(P) \to \mathcal{O}(d)$.
    \item \textbf{Backward-Free Estimation:} Crucially, our closed-form derivation allows us to compute the exact gradient norm using only forward-pass statistics (residuals and embeddings). This completely eliminates the need for the computationally expensive backpropagation process (backward pass), which typically consumes significantly more time and memory than the forward pass.
\end{enumerate}
Together, these structural optimizations ensure that influence profiling incurs negligible overhead relative to standard inference, guaranteeing scalability for large-scale video datasets.

\subsection{Algorithm}

\begin{algorithm}[h]
\caption{Dimension-Disentangled Pruning (DDP)}
\label{alg:rdap}
\begin{algorithmic}[1]
\REQUIRE Training dataset $\mathcal{D} = \{z_i\}_{i=1}^N$; Probe model parameters $\hat{\boldsymbol{\theta}}$; Loss functions $\{\mathcal{L}_k\}_{k=1}^K$; Per-dimension pruning ratio $\rho$.
\ENSURE Clean Dataset $\mathcal{D}_{clean}$.

\STATE Initialize high-risk supervision index set $\mathcal{I}_{drop} \leftarrow \emptyset$
\STATE Initialize score matrix $\mathbf{S} \leftarrow \mathbf{0}_{N \times K}$

\STATE \textit{// Phase 1: Influence Profiling}
\FOR{$i = 1$ to $N$}
    \FOR{$k = 1$ to $K$}
        \STATE Compute Self-Influence: $\mathbf{S}_{i,k} \leftarrow \nabla_{\boldsymbol{\theta}} \mathcal{L}_k(z_i; \hat{\boldsymbol{\theta}})^\top  \nabla_{\boldsymbol{\theta}} \mathcal{L}_k(z_i; \hat{\boldsymbol{\theta}})$
    \ENDFOR
\ENDFOR

\STATE \textit{// Phase 2: Union of Dimension-wise high-risk supervision data}
\FOR{$k = 1$ to $K$}
    \STATE Determine threshold $\tau_k$ as the $(1-\rho)$-quantile of column $\mathbf{S}_{:,k}$
    \FOR{$i = 1$ to $N$}
        \IF{$\mathbf{S}_{i,k} > \tau_k$}
            \STATE Add index $i$ to $\mathcal{I}_{drop}$ \COMMENT{Mark sample for removal if \textit{any} dimension fails}
        \ENDIF
    \ENDFOR
\ENDFOR

\STATE \textit{// Phase 3: Dataset Construction}
\STATE $\mathcal{D}_{clean} \leftarrow \{ z_i \mid i \notin \mathcal{I}_{drop} \}$

\STATE \textbf{return} $\mathcal{D}_{clean}$
\end{algorithmic}
\end{algorithm}

\begin{algorithm}[h]
\caption{Dimension-Disentangled Reweighting (DDR)}
\label{alg:rdar}
\begin{algorithmic}[1]
\REQUIRE Training dataset $\mathcal{D} = \{z_i\}_{i=1}^N$; Self-Influence Matrix $\mathbf{S} \in \mathbb{R}^{N \times K}$; Temperature $\tau$.
\ENSURE Dimension-specific Weight Matrix $\mathbf{W} \in \mathbb{R}^{N \times K}$.

\STATE Initialize weight matrix $\mathbf{W} \leftarrow \mathbf{0}_{N \times K}$

\STATE \textit{// Phase 1: Column-wise Global Statistics}
\FOR{$k = 1$ to $K$}
    \STATE Compute mean $\mu_k$ and std $\sigma_k$ of influence column $\mathbf{S}_{:, k}$
    \FOR{$i = 1$ to $N$}
        \STATE $\hat{\mathcal{S}}_{i,k} \leftarrow (\mathbf{S}_{i,k} - \mu_k) / (\sigma_k + \epsilon)$ \COMMENT{Z-score normalization}
    \ENDFOR
\ENDFOR

\STATE \textit{// Phase 2: Independent Mapping}
\FOR{$i = 1$ to $N$}
    \FOR{$k = 1$ to $K$}
        \STATE $w_{i,k} \leftarrow \left( 1 + \exp( \hat{\mathcal{S}}_{i,k} / \tau ) \right)^{-1}$ \COMMENT{sigmoid}
    \ENDFOR
\ENDFOR

\STATE \textit{// Phase 3: Global Mean Scaling}
\STATE Calculate global mean factor $\bar{w} = \frac{1}{N \cdot K} \sum_{i,k} w_{i,k}$
\STATE $\mathbf{W} \leftarrow \mathbf{W} / \bar{w}$ \COMMENT{Maintain effective learning rate}

\STATE \textbf{return} $\mathbf{W}$
\end{algorithmic}
\end{algorithm}

\section{Additional Analysis}
\subsection{Training Details}
\label{traindetails}
We utilize the VideoFeedback dataset~\citep{he2024videoscore}, comprising approximately 36,900 text-video pairs with human annotations across 5 dimensions.
All models are fully finetuned for 1 epoch using the AdamW optimizer with a learning rate of 1e-5 on Nvidia B200 using 16GPU hours.
Unless otherwise stated, we use the Spearman correlation ($\rho$) on the \textit{VideoFeedback-test} set~\citep{he2024videoscore} as the primary evaluation metrics.

\subsection{Model Generalization}
\label{sec:generalization}

To verify the universality of our proposed framework, we extend our experiments to different MVRM architectures.
Table~\ref{tab:generalization} presents the performance on Identis 8B~\citep{jiangmantis}.
We observe that both DDP and DDR consistently outperform the standard training baseline across all architectures.
This indicates that the \textit{Dimensional Heterogeneity} and \textit{data reliability} we identified are intrinsic data properties rather than model-specific artifacts.

\begin{table}[h]
    \centering
    \caption{\textbf{Generalization Analysis.} We report the Spearman correlation (\%) of our methods applied to the \textbf{Idefics2-8B} architecture. Both Dimension-Disentangled Pruning (DDP) and Reweighting (DDR) consistently outperform the baseline across all five evaluation dimensions.}
    \label{tab:generalization}
    \resizebox{1.0\linewidth}{!}{
        \begin{tabular}{l|c|ccccc}
            \toprule
            \multirow{2}{*}{\textbf{Model Architecture}} & \multirow{2}{*}{\textbf{Method}} & \multicolumn{5}{c}{\textbf{Evaluation Dimensions (Spearman $\rho$ \%)}} \\
            \cmidrule(lr){3-7}
             & & \textbf{Visual} & \textbf{Text} & \textbf{Temporal} & \textbf{Factual} & \textbf{Motion} \\ 
            \midrule
            
            \multirow{3}{*}{\textbf{Idefics2-8B}} 
            & Baseline & 79.36 & 72.34 & 53.23 & 48.63 & 76.89 \\
            \cmidrule{2-7}
            & \textbf{DDP} (Pruning) & 79.94 & 73.27 & \textbf{58.92} & 49.54 & 78.60 \\
            & \textbf{DDR} (Reweighting) & \textbf{81.16} & \textbf{73.60} & 54.88 & \textbf{50.44} & \textbf{79.83} \\
            
            \bottomrule
        \end{tabular}
    }
\end{table}

\subsection{Full Pairwise Visualization}
\label{appendix:fullpair}

To provide a comprehensive view of the relationships between dimension-specific self-influence scores, we present the full pairwise visualization of all evaluation dimensions in Fig.~\ref{fig:appendix_scatter}. Each subplot corresponds to a pair of dimensions $(d_i, d_j)$, where each point represents a training sample, plotted according to its normalized self-influence score along the two selected dimensions.

The visualization is constructed using the same MVRM and training set as in the main experiments. For each sample, we compute the diagonal entries of the influence matrix using the last-layer TracIn approximation, and apply min-max normalization within each dimension to ensure comparability across scales. 

As shown in Fig.~\ref{fig:appendix_scatter}, the majority of samples exhibit weak correlation across different dimensions, while a non-negligible subset forms pronounced high-risk supervision data that are highly influential in one dimension but marginal in others. This observation supports the masking effect discussed in Section~\ref{sec:vis_analysis}, where aggregated scalar influence criteria fail to capture dimension-specific data quality.

\begin{figure*}[t]
\centering
\subfloat[Dynamic vs. Factual]{
    \includegraphics[width=0.32\textwidth]{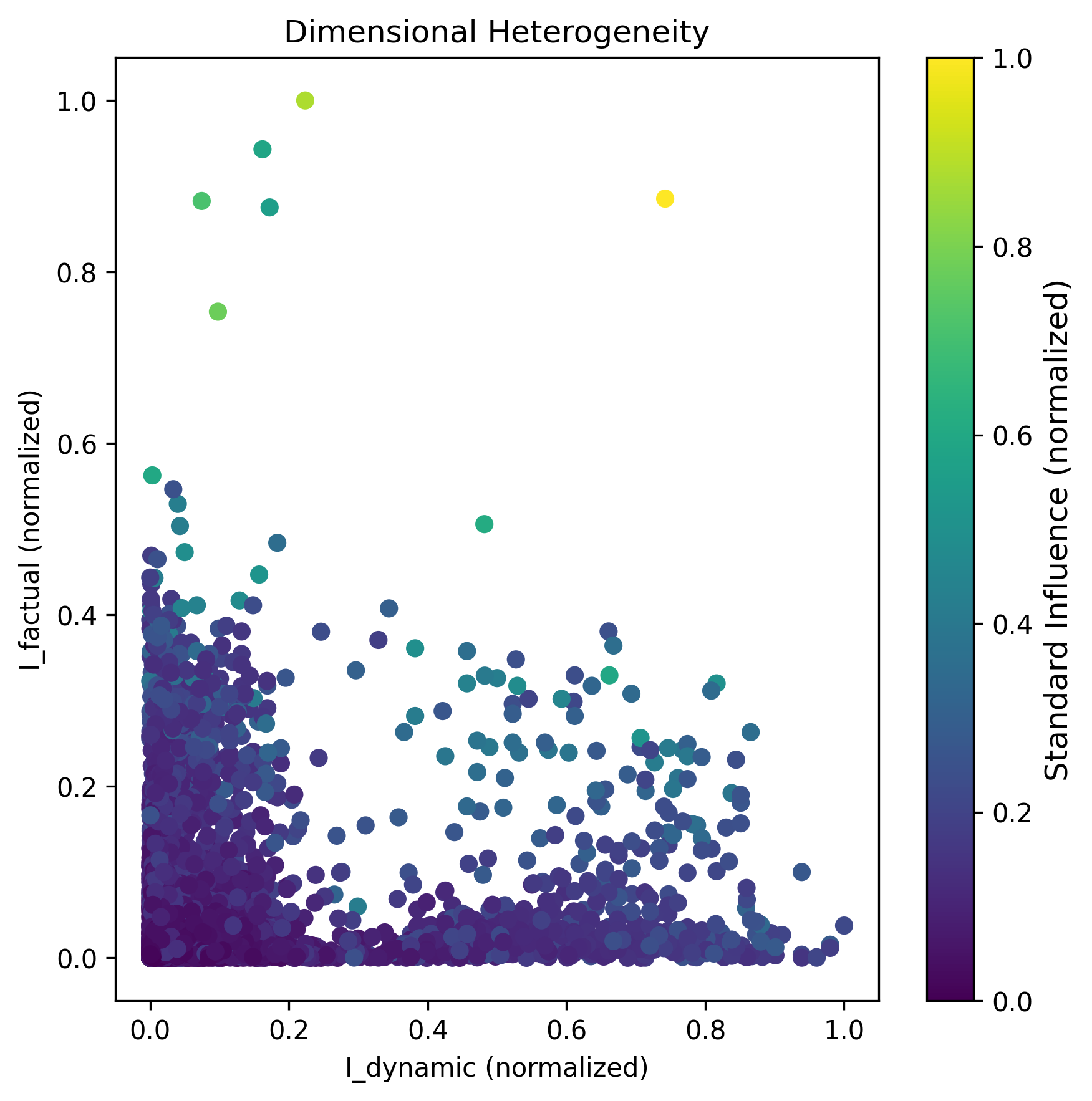}
}
\hfill
\subfloat[Dynamic vs. Text]{
    \includegraphics[width=0.32\textwidth]{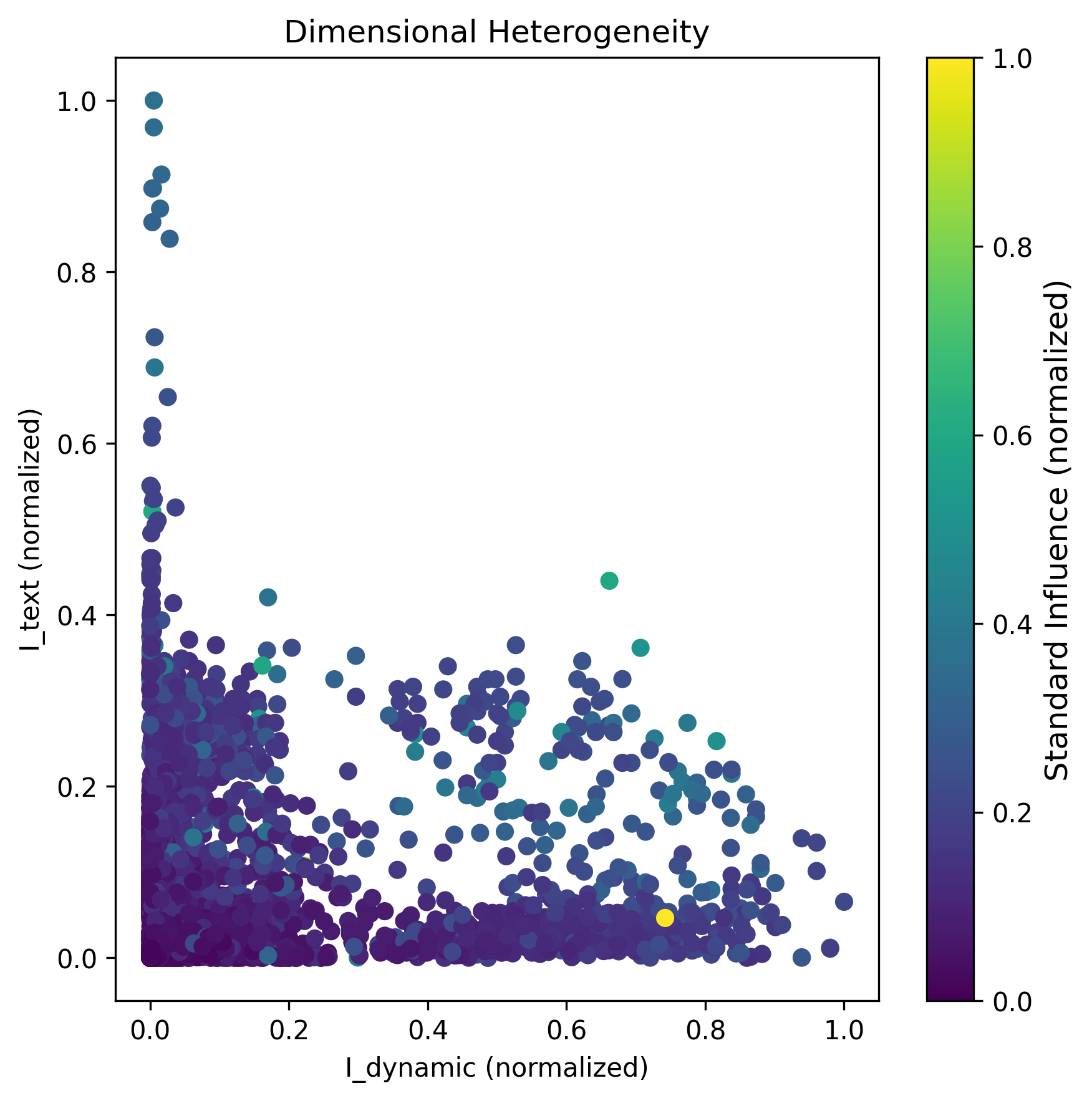}
}
\hfill
\subfloat[Temporal vs. Dynamic]{
    \includegraphics[width=0.32\textwidth]{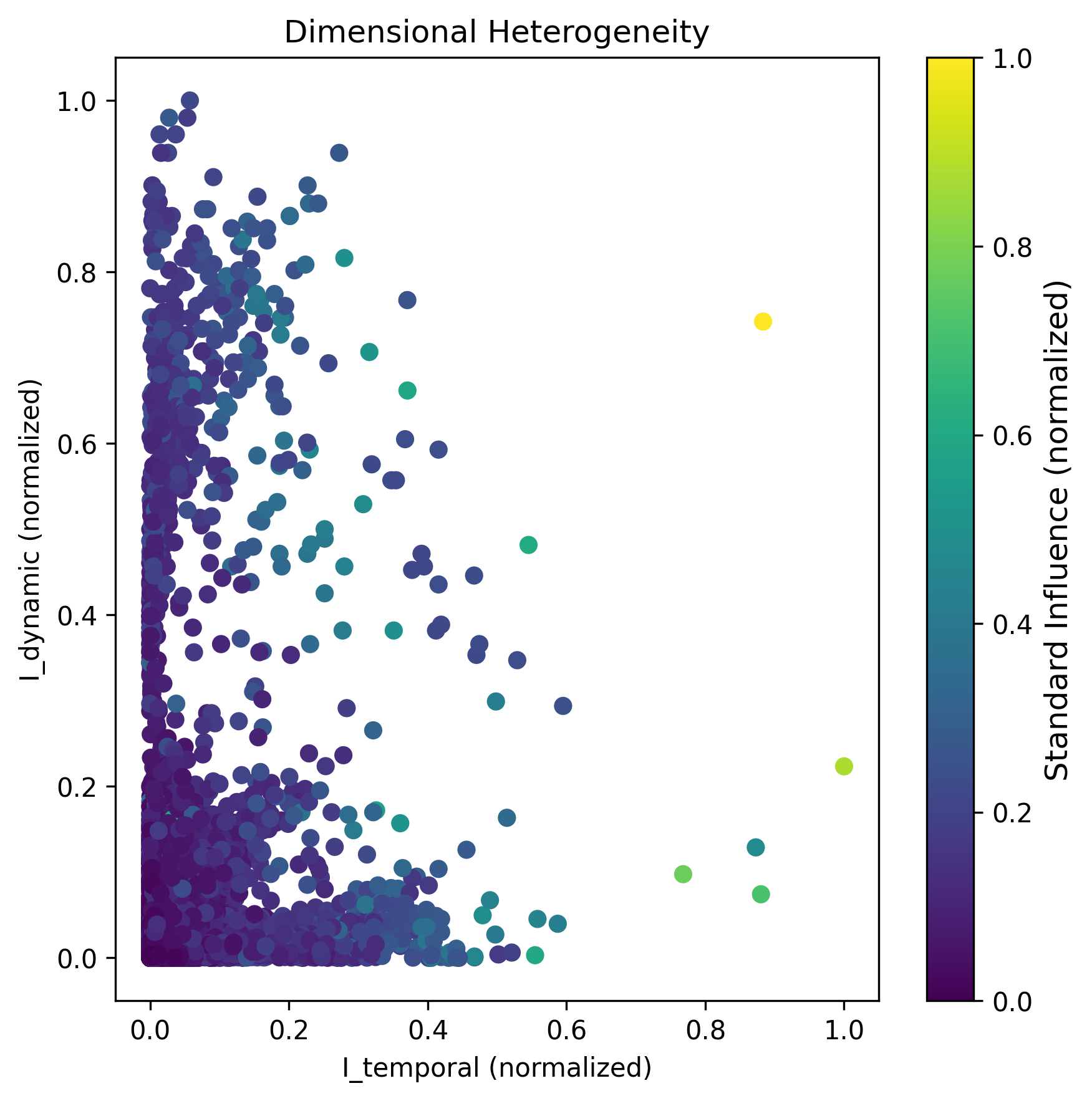}
}

\vspace{0.8em}

\subfloat[Temporal vs. Factual]{
    \includegraphics[width=0.32\textwidth]{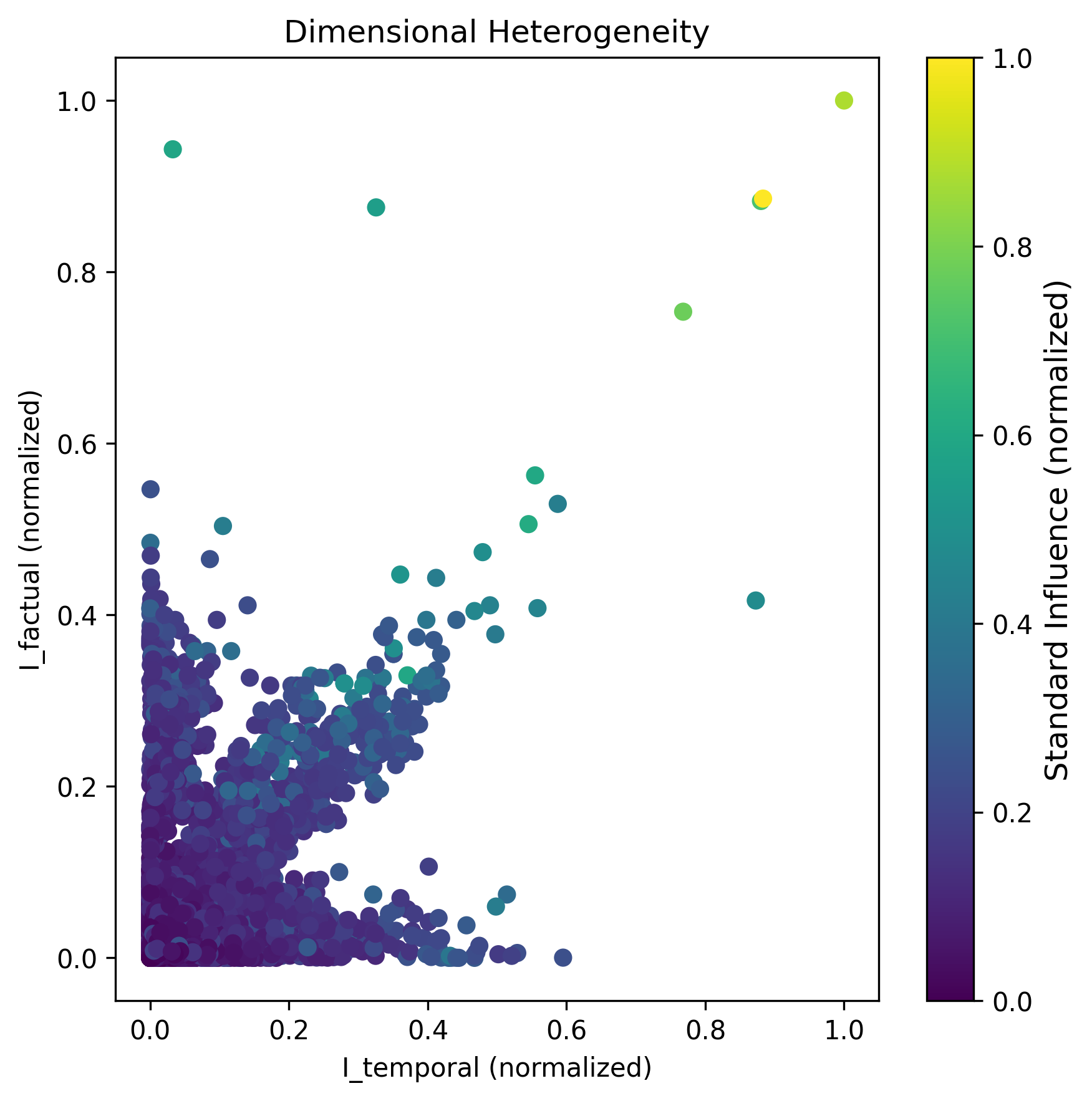}
}
\hfill
\subfloat[Temporal vs. Text]{
    \includegraphics[width=0.32\textwidth]{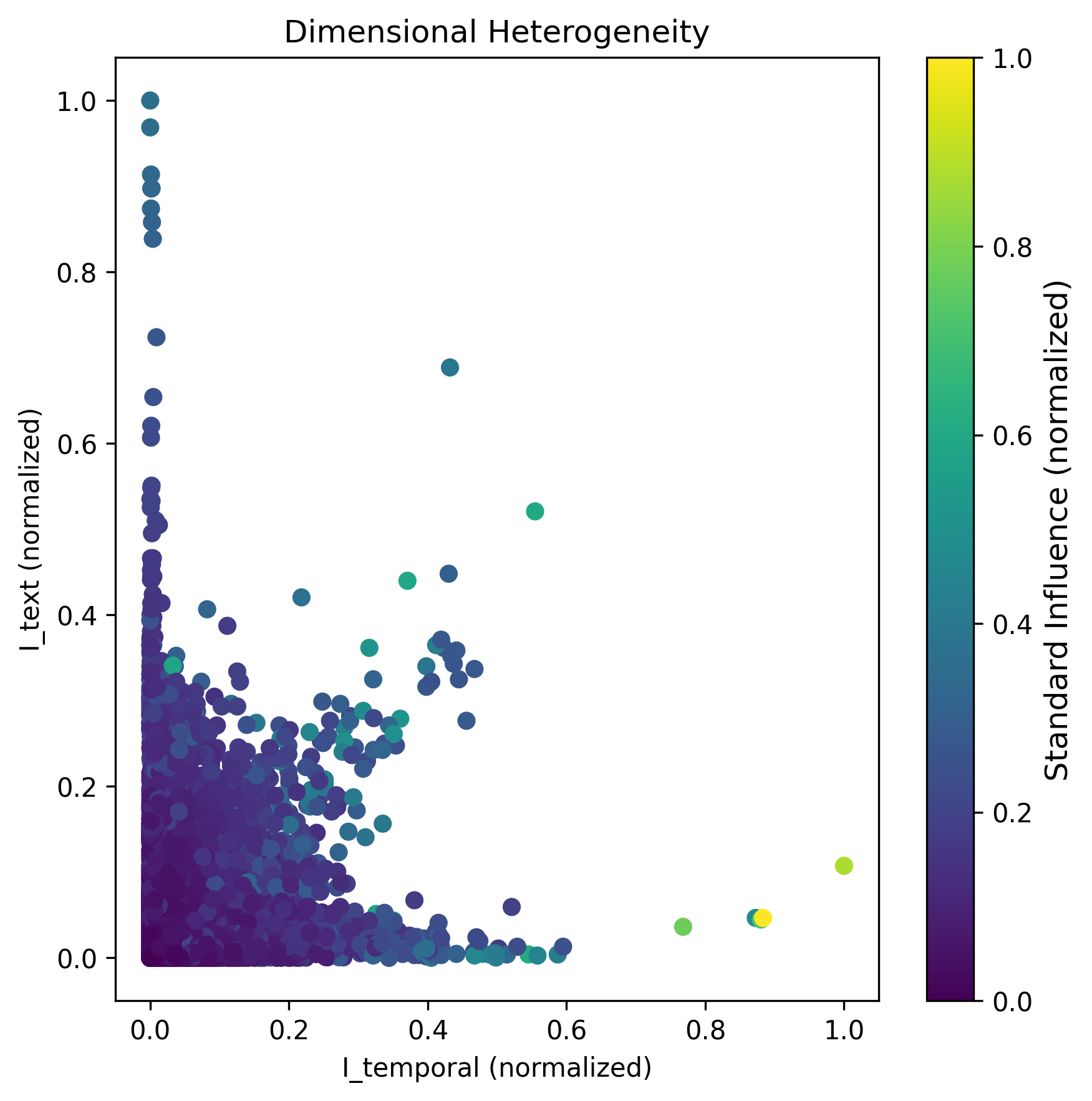}
}
\hfill
\subfloat[Text vs. Factual]{
    \includegraphics[width=0.32\textwidth]{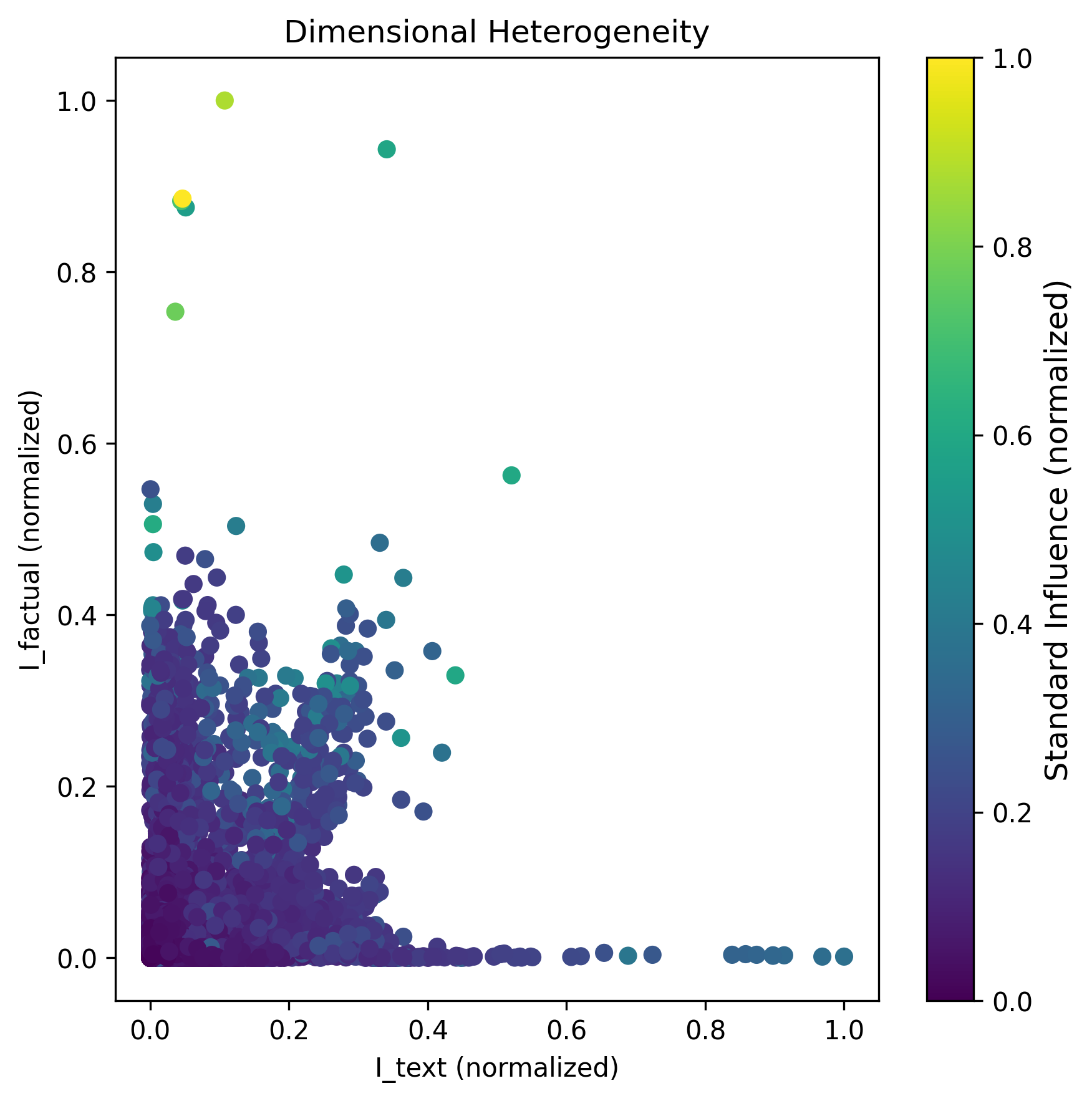}
}

\caption{Scatter plot comparisons across dimensions.}
\label{fig:appendix_scatter}
\end{figure*}

\subsection{Additional Analysis on DDR Temperature}
\label{app:ddr_temperature}

We further study the sensitivity of DDR to the temperature hyperparameter $\tau$ in Eq.~\ref{eq:sigmoid_weight}.
Specifically, we evaluate $\tau \in \{0.1, 0.5, 1.0, 5.0\}$ while keeping all other training and evaluation settings unchanged.
As shown in Table~\ref{tab:tau_sensitivity}, DDR remains relatively stable across a wide range of $\tau$ values.
In our main experiments, we use $\tau=1.0$ as a fixed default value without extensive tuning.
Since $\tau$ is not selected on the test set and is shared across all reward dimensions, this design avoids test-set leakage and reduces dimension-specific hyperparameter tuning.

\begin{table}[h]
    \centering
    \caption{
    Sensitivity analysis of the DDR temperature $\tau$.
    Spearman correlations ($\rho \times 100$) are reported across five reward dimensions.
    DDR is relatively stable across a wide range of $\tau$ values.
    }
    \label{tab:tau_sensitivity}
    \resizebox{0.82\linewidth}{!}{
    \begin{tabular}{c ccccc}
        \toprule
        $\boldsymbol{\tau}$ & \textbf{Visual} & \textbf{Temporal} & \textbf{Dynamic} & \textbf{Text} & \textbf{Factual} \\
        \midrule
        0.1 & 82.54 & 74.36 & 56.59 & 49.56 & 79.84 \\
        0.5 & 82.15 & 74.84 & 57.17 & 52.38 & 80.19 \\
        1.0 & 82.29 & 74.91 & 57.61 & 52.72 & 80.12 \\
        5.0 & 82.21 & 75.62 & 57.02 & 52.27 & 80.36 \\
        \bottomrule
    \end{tabular}
    }
\end{table}

\subsection{Stability Across Multiple Runs}
\label{app:stability}

To evaluate the stability of our method, we report the standard deviation over three independent runs.
As shown in Table~\ref{tab:std_results}, both DDR and DDP exhibit comparable or smaller variance than the baseline across most reward dimensions.
Together with the main results in Table~\ref{tab:main_results_merged}, this suggests that the improvements are stable across runs rather than being caused by random variation.

\begin{table}[h]
    \centering
    \caption{
    Standard deviation over three independent runs.
    We report the standard deviation of Spearman correlations ($\rho \times 100$) across five reward dimensions.
    }
    \label{tab:std_results}
    \resizebox{0.62\linewidth}{!}{
    \begin{tabular}{lccccc}
        \toprule
        \textbf{Method} & \textbf{VQ} & \textbf{TC} & \textbf{DD} & \textbf{TVA} & \textbf{FC} \\
        \midrule
        Baseline & 0.52 & 0.29 & 0.83 & 0.53 & 0.44 \\
        DDR      & 0.32 & 0.24 & 0.61 & 0.43 & 0.43 \\
        DDP      & 0.41 & 0.36 & 0.53 & 0.46 & 0.26 \\
        \bottomrule
    \end{tabular}
    }
\end{table}

\subsection{Validation-Based Pruning Ratio Selection}
\label{app:pruning_ratio_selection}

Since the public dataset does not provide an official validation split, we construct a held-out validation set by randomly sampling 40\% of the original test split for hyperparameter selection.
We select the pruning ratio for the main results in Table~\ref{tab:main_results_merged} according to the best validation performance.

\begin{table}[h]
    \centering
    \caption{
    Validation performance under different pruning ratios.
    Spearman correlations ($\rho \times 100$) are reported across five reward dimensions.
    The pruning ratios used in Table~\ref{tab:main_results_merged} are selected based on this validation study.
    }
    \label{tab:valid_pruning_ratio}
    \resizebox{0.62\linewidth}{!}{
    \begin{tabular}{c ccccc}
        \toprule
        \textbf{Ratio (\%)} & \textbf{VQ} & \textbf{TC} & \textbf{DD} & \textbf{TVA} & \textbf{FC} \\
        \midrule
        0.1 & \textbf{81.10} & 74.18 & 56.33 & \textbf{52.55} & 79.36 \\
        0.5 & 80.74 & \textbf{75.46} & \textbf{57.84} & 51.57 & \textbf{79.97} \\
        1.0 & 80.56 & 73.88 & 57.37 & 50.62 & 77.19 \\
        \bottomrule
    \end{tabular}
    }
\end{table}

As shown in Table~\ref{tab:valid_pruning_ratio}, the validation results suggest a smaller pruning ratio for VQ, TVA and a larger ratio for TC, DD, and FC.
This trend is consistent with the sensitivity analysis in Fig.~\ref{fig:pruning_ratio}.

\subsection{Details on the Off-Diagonal Ablation}
\label{app:off_diagonal_param_space}

We clarify the parameter space used in the off-diagonal ablation.
In our main method, we compute dimension-specific self-influence using only the final linear regression head.
This choice is motivated by both efficiency and the structure of MVRMs: each reward dimension has an independent output vector in the final head, making the diagonal term a direct and efficient proxy for the supervision risk of the corresponding sample-dimension pair.
Under this strictly last-layer parameter space, and with the identity-Hessian approximation, the off-diagonal entries between independent output heads are analytically zero.

The off-diagonal ablation is therefore computed in a larger parameter space.
Specifically, we use the parameters of the last two layers, which include shared representation parameters before the final dimension-specific output heads.
In this setting, gradients from different reward dimensions are coupled through shared parameters, so the off-diagonal influence terms are generally non-zero even under the identity-Hessian approximation.
However, this requires additional backpropagation through the shared layer and is therefore less efficient than our final-head diagonal approximation.

The results in Table~\ref{tab:row_sum_ablation} show that incorporating these off-diagonal terms through row-sum aggregation does not improve performance over the diagonal-only method.
This suggests that the additional cross-dimensional coupling captured by the larger parameter space is not a more reliable signal for data refinement.
Therefore, our diagonal-only design provides a favorable accuracy--efficiency trade-off: it avoids unnecessary backward computation through shared layers while preserving the dimension-specific supervision-risk signal needed for pruning and reweighting.

\subsection{Overlap Analysis of Dimension-Specific High-Risk Sets}
\label{app:ddp_overlap}

DDP removes a sample if it falls into the high-risk set of any reward dimension.
One concern is that this union-based criterion may be overly aggressive when multiple dimensions are considered.
To analyze this, we measure the cumulative removal ratio as we progressively include more reward dimensions.
For each dimension, we select the top 0.5\% samples with the highest dimension-specific self-influence.

\begin{table}[!h]
    \centering
    \caption{
    Overlap analysis of dimension-specific high-risk sets.
    We report the cumulative removal ratio when progressively taking the union of top 0.5\% high-self-influence samples from more dimensions.
    If all dimensions selected identical samples, the removal ratio would remain 0.5\%; if the sets were fully disjoint, it would reach 2.5\% for five dimensions.
    }
    \label{tab:ddp_overlap}
    \resizebox{0.62\linewidth}{!}{
    \begin{tabular}{c ccccc}
        \toprule
        \textbf{\# Dimensions Included} & 1 & 2 & 3 & 4 & 5 \\
        \midrule
        \textbf{Cumulative Removal Ratio (\%)} & 0.50 & 0.88 & 1.37 & 1.82 & 2.14 \\
        \bottomrule
    \end{tabular}
    }
\end{table}

As shown in Table~\ref{tab:ddp_overlap}, the cumulative removal ratio increases from 0.50\% to 2.14\% as more dimensions are included.
This value is lower than the 2.50\% removal ratio expected if all dimension-specific high-risk sets were fully disjoint, indicating that there is partial overlap across dimensions.
At the same time, the removal ratio is substantially higher than 0.50\%, suggesting that different reward dimensions also identify unique high-risk samples.

This result supports the motivation of dimension-disentangled pruning.
If all dimensions selected almost the same samples, global pruning would be sufficient.
Instead, the observed partial overlap indicates that each dimension contributes additional high-risk samples, while the total removal ratio remains moderate.
Therefore, the union-based DDP criterion captures dimension-specific supervision risk without excessively discarding the training data.

\subsection{Combination of DDP and DDR.}
We also evaluate combining DDP and DDR, but do not observe significant additional gains compared with using either strategy alone.
This is likely because both methods are derived from the same underlying dimension-specific self-influence signal and therefore act on largely overlapping high-risk samples.
As a result, their effects are not strongly complementary in practice.

\subsection{Controlled Label-Noise Detection}
\label{app:controlled_label_noise}

A key question is whether high dimension-specific self-influence is associated with true label noise or merely reflects difficult but correct samples.
To directly examine this issue, we conduct a controlled label-noise detection experiment.
Specifically, for each reward dimension, we randomly select 10\% of the training samples and corrupt only the labels of that dimension while keeping the labels of all other dimensions unchanged.
This creates a known set of dimension-specific anomalous supervision signals, allowing us to quantitatively evaluate whether self-influence can recover the injected label noise.

After label corruption, we compute the dimension-specific self-influence score $\mathcal{S}_k(z)$ for each sample and use it as an anomaly score for detecting corrupted labels in the $k$-th reward dimension.
We evaluate detection performance using AUROC, where corrupted samples are treated as positives and unmodified samples as negatives.
This setup isolates label noise from naturally hard but correct samples, since the corrupted set is known by construction.

\begin{table}[t]
    \centering
    \caption{
    Controlled label-noise detection results.
    We randomly corrupt 10\% of training labels for each reward dimension and use dimension-specific self-influence as the anomaly score.
    AUROC is reported for detecting the injected label noise.
    }
    \label{tab:controlled_label_noise}
    \resizebox{0.72\linewidth}{!}{
    \begin{tabular}{lccccc}
        \toprule
        \textbf{Metric} & \textbf{Visual} & \textbf{Temporal} & \textbf{Dynamic} & \textbf{Text} & \textbf{Factual} \\
        \midrule
        AUROC & 99.04 & 98.35 & 98.25 & 95.64 & 97.59 \\
        \bottomrule
    \end{tabular}
    }
\end{table}

As shown in Table~\ref{tab:controlled_label_noise}, dimension-specific self-influence achieves AUROC scores above 95 across all five reward dimensions.
These results indicate that high self-influence is highly effective at detecting injected dimension-specific label anomalies.
Importantly, this experiment does not claim that every naturally high-influence sample is label noise; difficult but correct samples may still receive high influence scores.
Rather, the results provide controlled evidence that when true label noise exists, dimension-specific self-influence can reliably assign higher anomaly scores to corrupted supervision signals.
This supports our use of self-influence as a supervision-risk signal for prioritizing samples for pruning, reweighting, or human inspection.

\subsection{Additional Label Noise Visualization}
\label{app:vis_more}

In this section, we provide additional qualitative examples of label noise to complement the analysis in the main text. Specifically, we visualize a broader set of training samples that exhibit high T2V Alignment dimension-specific self-influence but low visual quality self-influence.

All samples are drawn from the same training split and evaluated using the trained MVRM. For each example, the ground truth is originally labeled as Score~1; however, upon reevaluation, we find that the text is in fact aligned with the video.

\textbf{Analysis: Conflict between Annotation Protocols and VLM Priors.}
We attribute the high sef-influence of these samples to the fundamental conflict between rigid dataset annotation protocols and the open-world semantic knowledge inherent in the VLM-initialized reward model.
Since the reward model is initialized from a powerful VLM model\citep{jiangmantis}, it correctly perceives the semantic connection. However, the loss function enforces convergence to the incorrect Score 1 label, generating substantial self-influence.

\subsection{Limitation and Future Work} 
\label{limitation}
While our method successfully disentangles data quality at the objective level (e.g., distinguishing visual quality from semantic alignment), it currently operates on the holistic representation of each video sample. 
Specific defects often occur locally—such as transient artifacts in a few frames or localized spatial distortions—which might not be fully captured by a video-level influence score. 
Looking ahead, a primary focus of our research will be extending this analysis to the \textit{spatio-temporal domain}. We believe that developing mechanisms to identify and refine data at the frame or patch level will yield even more granular control and further advance the precision of video generation alignment.

\section{Broader Impacts}
\label{app:broader_impacts}

This work studies data refinement for multidimensional video reward models, aiming to improve the reliability of fine-grained video generation evaluation. 
A positive societal impact is that more reliable reward models may help identify low-quality, unsafe, or poorly aligned generated videos, supporting better evaluation and monitoring of video generation systems. 
Our dimension-wise analysis may also help reveal annotation inconsistencies and improve the quality of human feedback datasets.

At the same time, improved reward models may indirectly benefit stronger video generation systems, which could be misused to produce misleading, harmful, or deceptive synthetic media. 
In addition, if the training annotations contain social or cultural biases, refining reward models based on these annotations may preserve or amplify such biases. 
Incorrect reward estimates may also lead to overconfidence in automated evaluation results.

To mitigate these risks, we recommend using multidimensional reward models as auxiliary evaluation tools rather than sole decision makers, auditing annotations across demographic and cultural contexts, and combining automated reward scores with human review for high-stakes applications. 
Future deployment of such models should include monitoring for biased behavior, misuse, and distribution shifts.

\clearpage
\begin{figure}[h]
    \centering

    \begin{subfigure}{0.45\textwidth}
        \includegraphics[width=\textwidth]{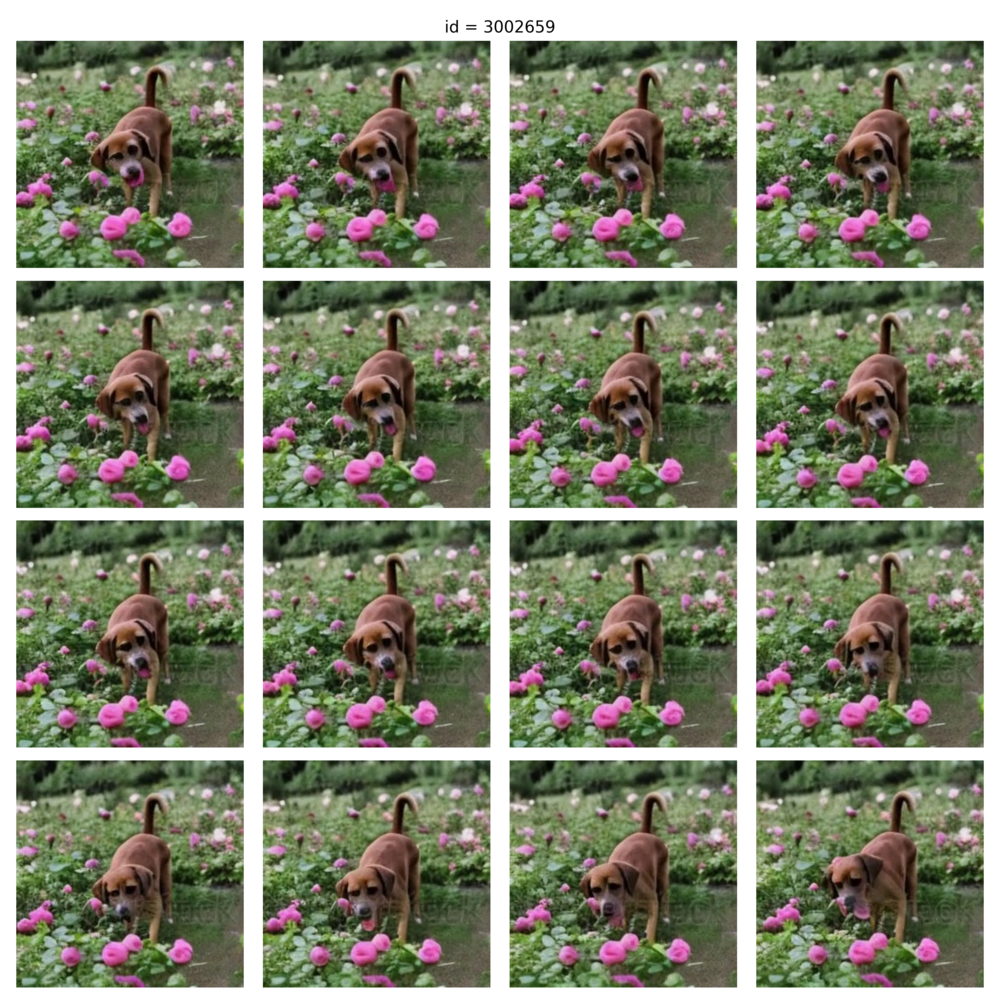}
        \twolinecap{cinematic shot of a cute dog playing in a rose garden.}
    \end{subfigure}
    \hfill
    \begin{subfigure}{0.45\textwidth}
        \includegraphics[width=\textwidth]{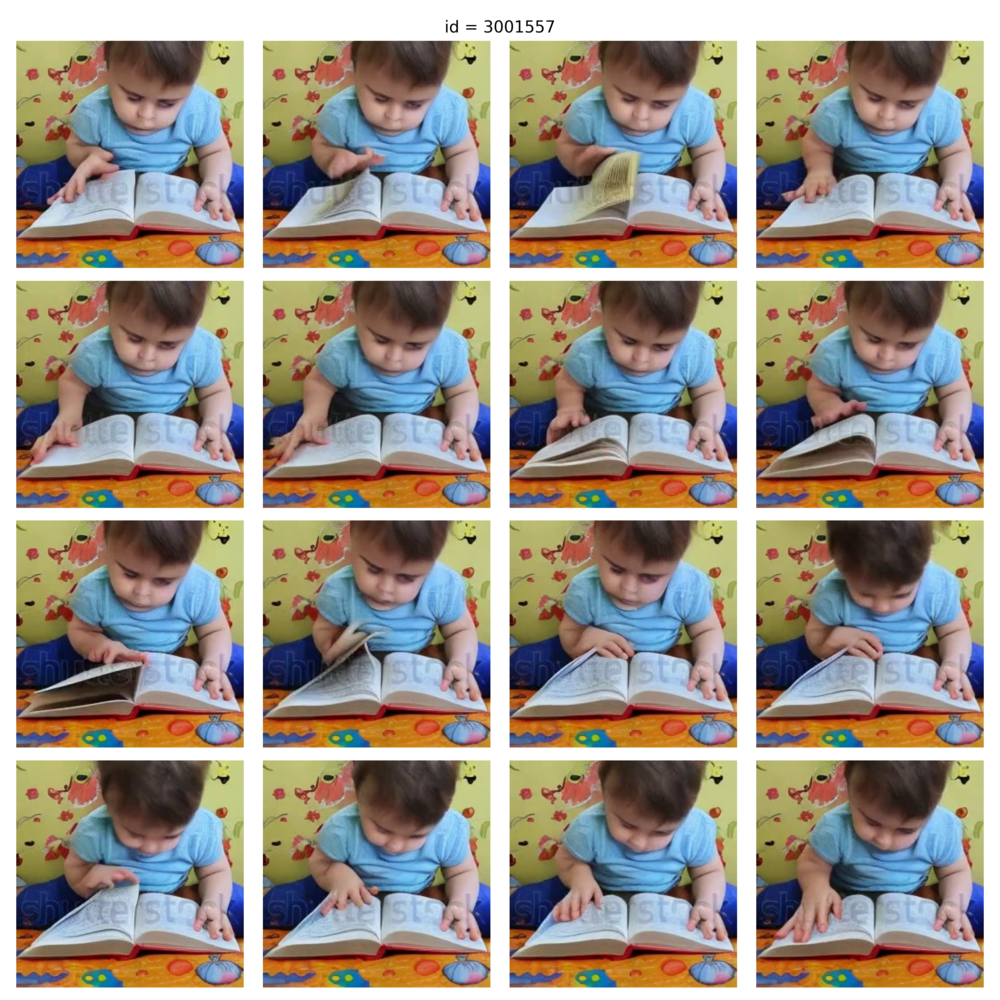}
        \twolinecap{a child play with a book.}
    \end{subfigure}

    \vspace{1em}

    \begin{subfigure}{0.45\textwidth}
        \includegraphics[width=\textwidth]{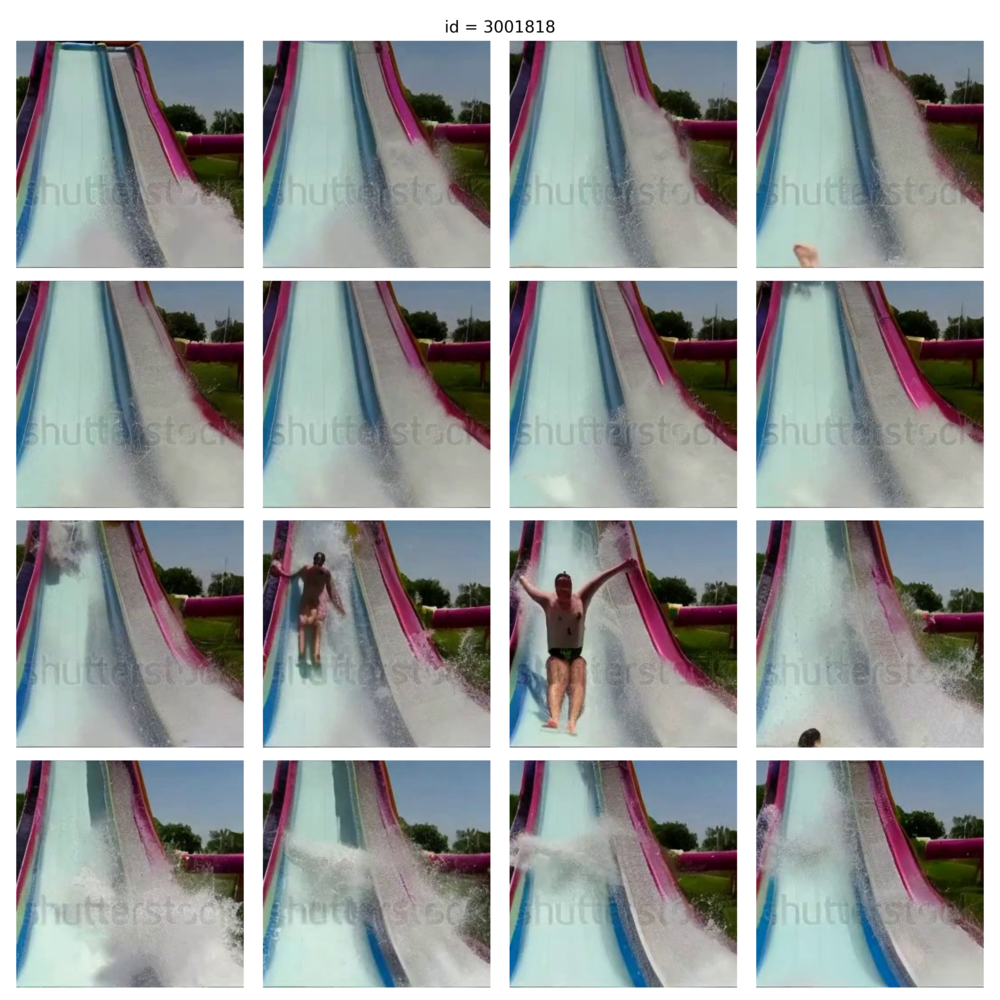}
        \twolinecap{a person going down a fast water slide.}
    \end{subfigure}
    \hfill
    \begin{subfigure}{0.45\textwidth}
        \includegraphics[width=\textwidth]{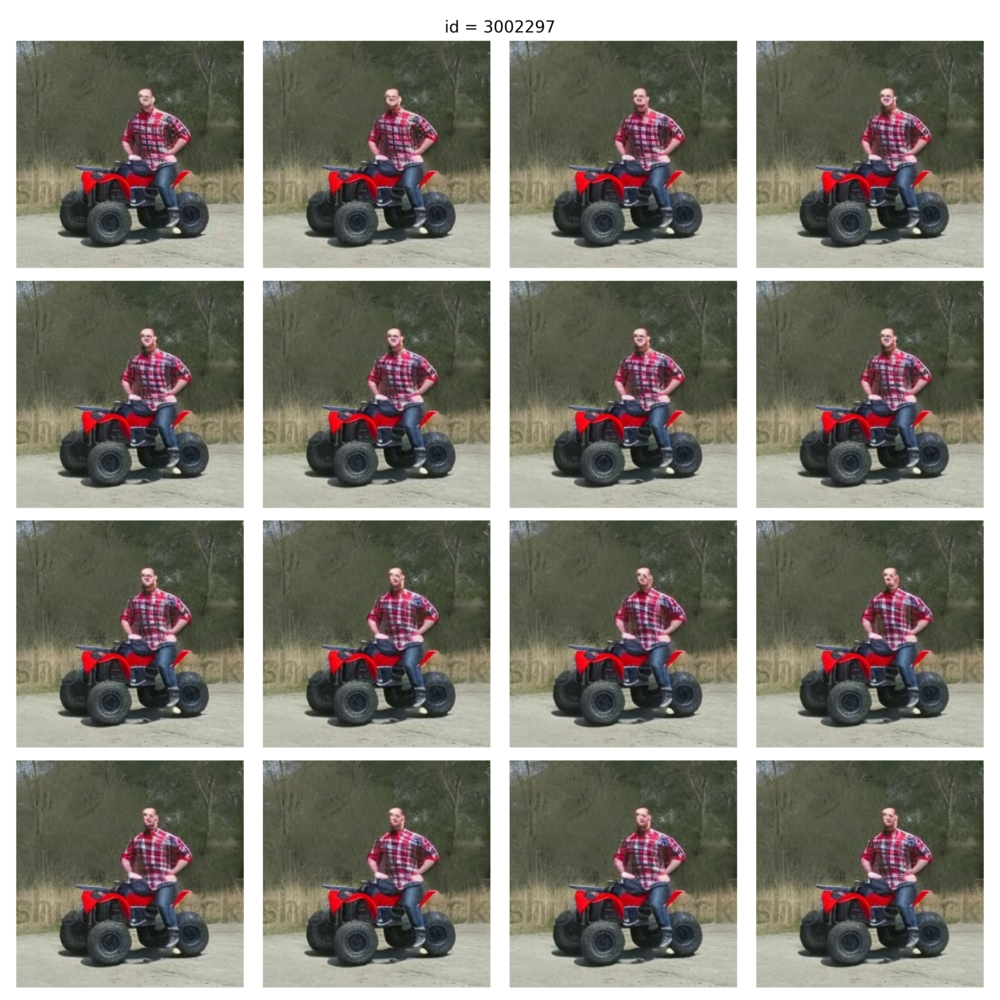}
        \twolinecap{a man sitting on a quad.}
    \end{subfigure}
 
    \caption{Label noise visualization.}
\end{figure}
\clearpage
    \begin{figure}[h]
    \centering
    \begin{subfigure}{0.45\textwidth}
        \includegraphics[width=\textwidth]{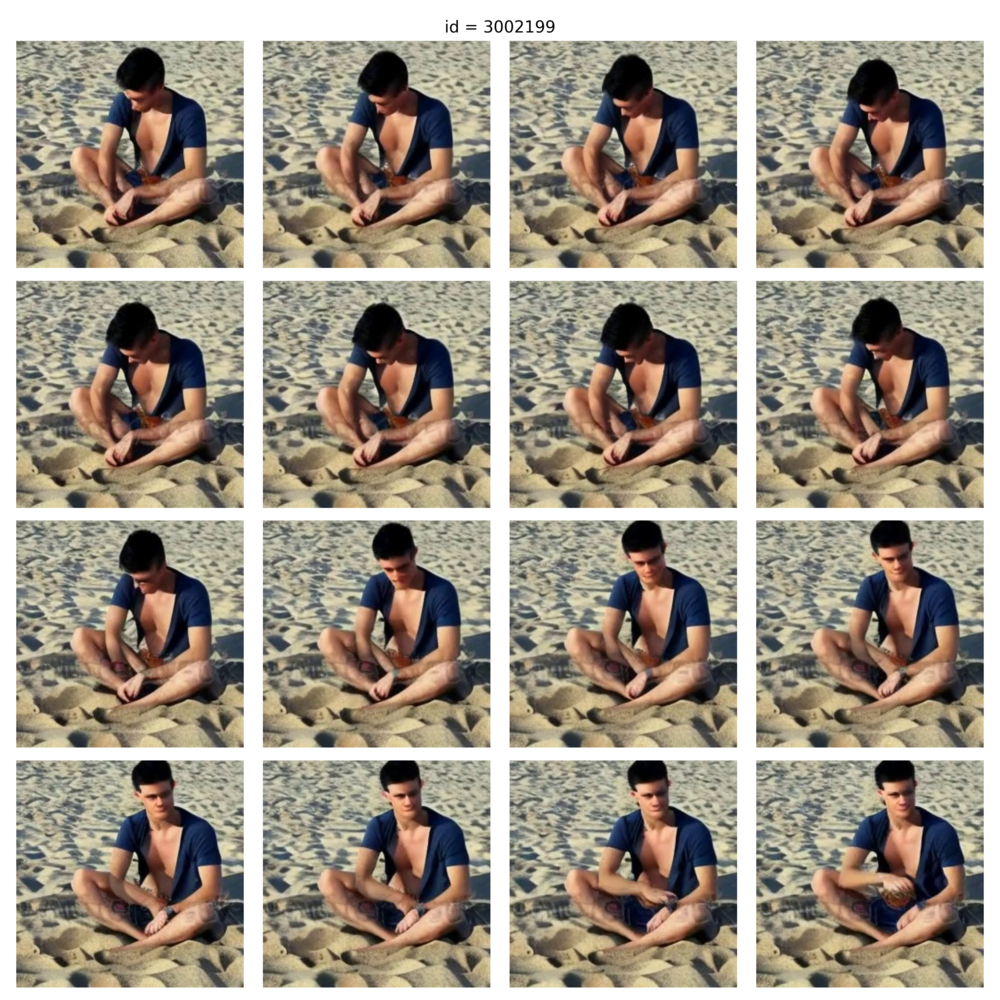}
        \twolinecap{handsome guy sitting on the sand.}
    \end{subfigure}
    \hfill
    \begin{subfigure}{0.45\textwidth}
        \includegraphics[width=\textwidth]{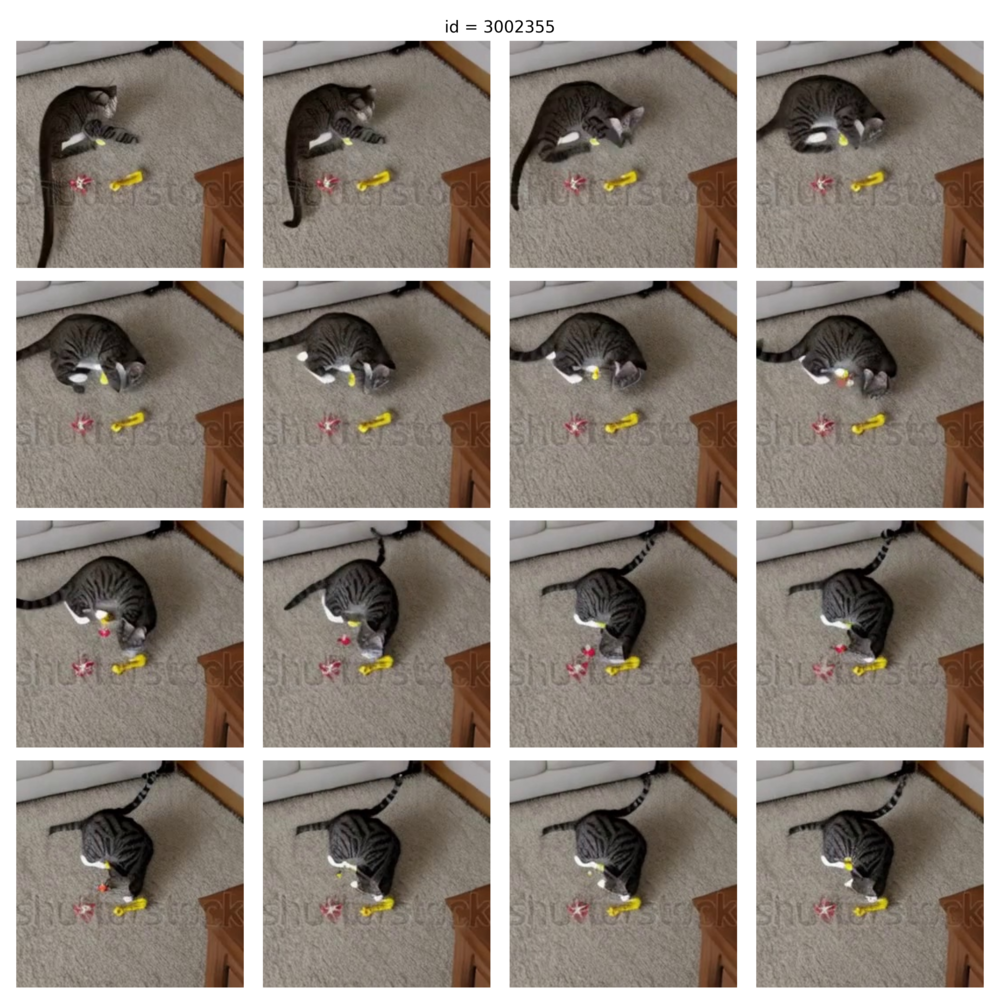}
        \twolinecap{a realistic adorable kitty playing with a cat toy in a brightly lit living room.}
    \end{subfigure}
 
    \vspace{1em}
    \begin{subfigure}{0.45\textwidth}
        \includegraphics[width=\textwidth]{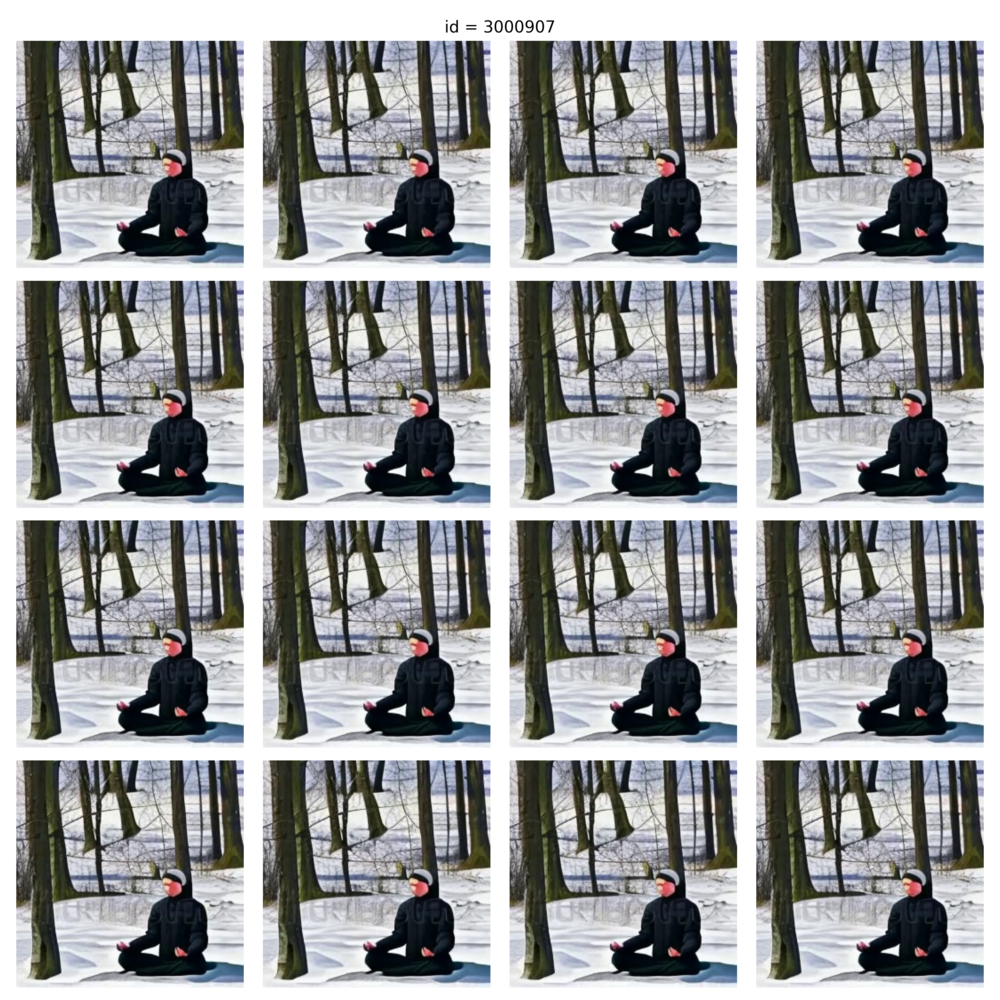}
        \twolinecap{Meditating man on snow during the day, realistic art, 9:16.}
    \end{subfigure}
    \hfill
    \begin{subfigure}{0.45\textwidth}
        \includegraphics[width=\textwidth]{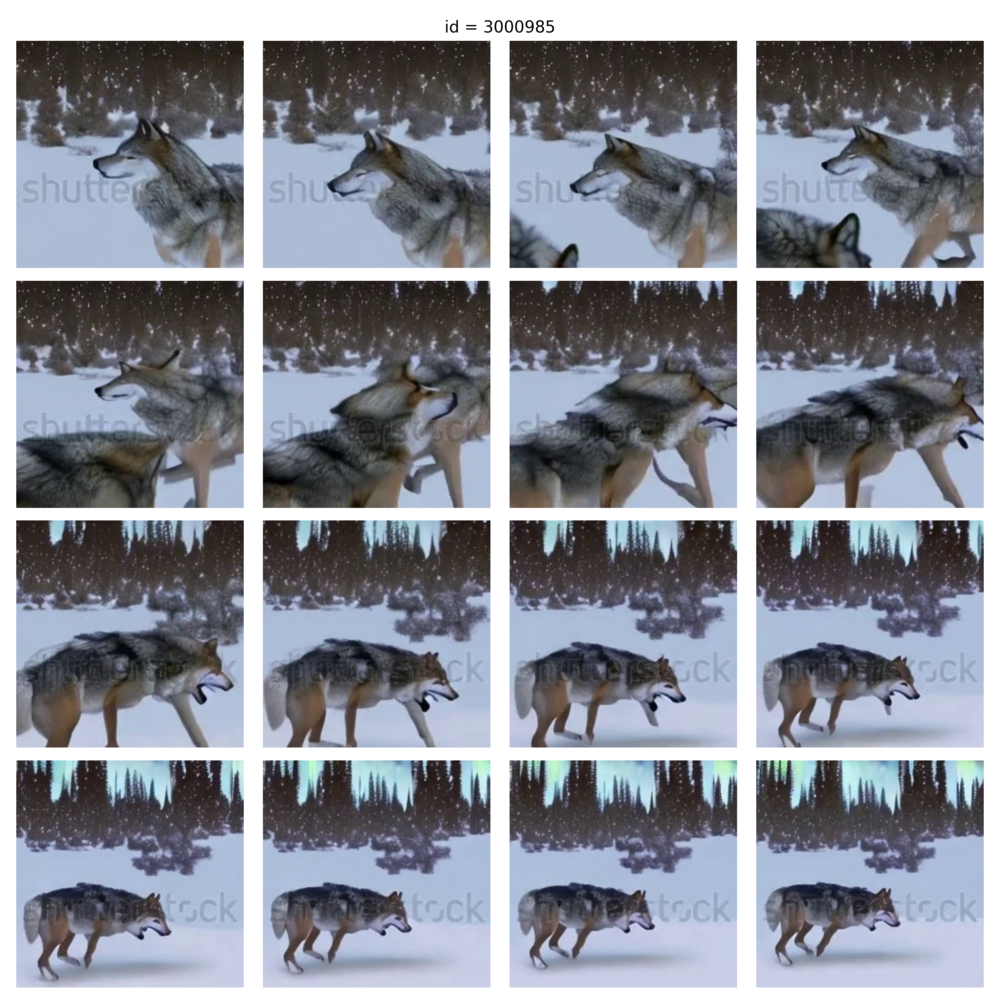}
        \twolinecap{running wolf, northern lights, snow, zoom out, realistic, 4k.}
    \end{subfigure}
    \caption{Label noise visualization.}
\end{figure}
\clearpage


\end{document}